\documentclass[11pt]{article}

\def \citep{\cite}

\usepackage[margin = 1in]{geometry}

\usepackage{palatino}


\usepackage{amsmath}
\usepackage[english]{babel}
\usepackage[toc,page]{appendix} 
\usepackage{hyperref}
\usepackage{graphicx,wrapfig,lipsum}	
\usepackage{yfonts}	 		
\usepackage{amssymb}
\usepackage{amsmath}
\usepackage{amsthm}
\usepackage{scalerel}
\usepackage{verbatim}
\usepackage[lined,boxed,ruled,norelsize,algo2e,linesnumbered]{algorithm2e}
\usepackage{booktabs}
\newlength{\oldintextsep} 
\setlength{\oldintextsep}{\intextsep}

\usepackage{enumitem}

\usepackage{sidecap} 

\usepackage{dsfont}
\def \ind{\mathds{1}}
\def \N{\mathbb{N}}
\def \R{\mathbb{R}}
\def \G{\mathbb{G}}
\def \calA{\mathcal{A}}
\def \calB{\mathcal{B}}
\def \calC{\mathcal{C}}
\def \calV{\mathcal{V}}
\def \calH{\mathcal{H}}
\def \calE{\mathcal{E}}
\def \calN{\mathcal{N}}
\def \calS{\mathcal{S}}
\def \calX{\mathcal{X}}
\def \calG{\mathcal{G}}
\def \calF{\mathcal{F}}

\usepackage{algorithm}
\usepackage{algorithmic}
\usepackage{tikz}

\newtheorem{definition}{Definition}
\newtheorem{theorem}{Theorem}
\newtheorem{claim}{Claim}

\newtheorem{corollary}{Corollary}
\newtheorem{proposition}{Proposition}
\newtheorem{conjecture}{Conjecture}

\usepackage{amsthm}

\newcommand{\ggnn}{Recursive Neighborhood Pooling     }
\newcommand{\lb}{\lbrace\!\!\lbrace}
\newcommand{\rb}{\rbrace\!\!\rbrace}

\newcommand{\sj}[1]{\textcolor{blue}{SJ: #1}}

\title{ Counting Substructures with Higher-Order Graph Neural Networks: 
Possibility and Impossibility Results  }


\author{ Behrooz Tahmasebi\\ CSAIL, MIT\\ \url{bzt@mit.edu}   \and Derek Lim\\ CSAIL, MIT \\ \url{dereklim@mit.edu} \and Stefanie Jegelka\\ CSAIL, MIT \\ \url{stefje@mit.edu}
}

\date{}

%

\begin{document}

\maketitle

\begin{abstract}
  While message passing Graph Neural Networks (GNNs) have become increasingly popular architectures for learning with graphs, recent works have revealed important shortcomings in their expressive power. In response, several higher-order GNNs have been proposed that substantially increase the expressive power, albeit at a large computational cost.
  Motivated by this gap, we explore alternative strategies and lower bounds. In particular, we analyze a new recursive pooling technique of local neighborhoods that allows different tradeoffs of computational cost and expressive power. 
  First, we prove that this model can count subgraphs of size $k$, and thereby overcomes a known limitation of low-order GNNs.
Second, we show how recursive pooling can exploit sparsity to reduce the computational complexity compared to the existing higher-order GNNs. 
More generally, we provide a (near) matching information-theoretic lower bound for counting subgraphs with graph representations that pool over representations of derived (sub-)graphs. We also discuss lower bounds on time complexity.
\end{abstract}

\section{Introduction}\label{intro}

Graph Neural Networks (GNNs) are powerful tools for  graph representation learning \citep{scarselli2008graph,kipf2017semi,hamilton2017inductive},
and have been successfully applied to molecule property prediction, simulating physics, social network analysis, knowledge graphs, traffic prediction and many other domains \citep{duvenaud2015convolutional, defferrard2016convolutional, battaglia2016interaction, jin2018learning}. 
The perhaps most widely used class of GNNs, Message Passing Graph Neural Networks (MPNNs) \citep{gilmer2017neural,kipf2017semi, hamilton2017inductive,  xu2018powerful, scarselli2008graph}, follow an iterative message passing scheme to compute a graph representation. 

Despite the empirical success of MPNNs,  their expressive power has been shown to be limited. For example, their discriminative power, at best, corresponds to the one-dimensional Weisfeiler-Leman (1-WL) graph isomorphism test 
\citep{xu2018powerful,morris2019weisfeiler}, so they cannot distinguish regular graphs, for instance. 
Likewise, they cannot count any induced subgraph with at least three vertices \citep{chen2020can}, or learn structural graph parameters such as clique information, diameter, conjoint or shortest cycle \citep{garg2020generalization}. 
Yet, in applications like computational chemistry, materials design or pharmacy \citep{elton2019deep,sun2020graph,jin2018learning}, the functions we aim to learn often depend on the presence or count of specific substructures, e.g., functional groups.


The limitations of MPNNs result from their inability to distinguish individual nodes. To resolve this issue, two routes have been studied: (1) using unique node identifiers \citep{loukas2019graph,sato2019approximation,abboud2021surprising}, and (2) higher-order GNNs that act on $k$-tuples of nodes. Node IDs, if available, enable Turing completeness for sufficiently large MPNNs \citep{loukas2019graph}. Higher-order networks use an encoding of $k$-tuples and then apply message passing 
\citep{morris2019weisfeiler}, or equivariant tensor operations \citep{maron2018invariant}. 

The expressive power of MPNNs is often measured in terms of a hierarchy of graph isomorphism tests, specifically the $k$-Weisfeiler-Leman ($k$-WL) hierarchy. The $k$-order models in \citep{maron2018invariant} and \citep{maron2019provably} are equivalent to the $k$-WL and ($k+1$)-WL ``tests'', respectively, and are universal for the corresponding function classes \citep{azizian2020characterizing,maron2019universality,keriven2019universal}. Yet, these models are computationally expensive, operating on $\Theta(n^k)$ tuples and according to current upper bounds requiring up to $ O(n^k)$ iterations~\citep{kiefer2020iteration}.
The necessary tradeoffs between expressive power and computational complexity are still an open question.
However, for specific classes of tasks this full universality may not be needed. Here, we study such an example 
of practical interest: counting substructures, 
as proposed in \citep{chen2020can}.
In particular, we study if it is possible to count given substructures with a GNN whose complexity is between that of MPNNs and existing higher-order GNNs.

To this end, we study a generic scheme followed by many GNN architectures, including MPNNs and higher-order GNNs \citep{morris2019weisfeiler,chen2020can}: 
 select a collection of subgraphs of the input graph, encode these, and 
apply an aggregation function on this collection. First, we study the power of pooling \emph{by itself}, as a multi-set function over node features. We prove that $k$ \emph{recursive} applications on each node's neighborhood allow to count any substructure of size $k$. This is in contrast to \emph{iterative} MPNNs. We call this technique \emph{Recursive neighborhood pooling (RNP)}. While subgraph pooling relates to the graph reconstruction conjecture, our strategy has important differences. In particular, we show how the aggregation ``augments'' local encodings, if they play together and the subgraphs are selected appropriately, and this reasoning may be of interest for the design of other, even partially, expressive architectures. Moreover, our results show that the complexity is \emph{adjustable} to the counting task of interest and the sparsity of the graph. 

The strategy of pooling subgraph encodings has previously been used for counting in Local Relational Pooling (LRP) \citep{chen2020can}. LRP relies on an isomorphic encoding of subgraphs, which is expensive -- e.g., the relational pooling it uses requires $O(k!)$ time for a subgraph of size $k$. Other higher-order GNNs would be expensive, too, as high orders are needed for complete isomorphism power. A major difference to our RNP is that our recursion uses subgraphs of \emph{varying} sizes and structures, many of them much smaller -- adapted to the graph structure and specific counting task. 

Furthermore, we study lower bounds on GNNs that count motifs. We show an information-theoretic lower bound on the number of subgraphs to encode, as a function of an encoding complexity. We also transfer computational lower bounds that apply to any counting GNN. The lower bounds show that the recursive pooling is close to tight.

In short, in this paper, we make the following contributions:
\vspace{-5pt}
\begin{itemize}[leftmargin=8pt] \setlength{\itemsep}{0pt}
\item We study the power of pooling encodings of subgraphs, and show that pooling, as an injective multi-set function, is sufficient \emph{by itself} for counting when applied \emph{recursively} on appropriate subgraphs, remarkably without relying on other encoding techniques or node IDs. This is different from any other strategy we are aware of in the literature.

\item We analyze the complexity of recursive pooling, as a function of the task and input graph.

\item We provide complexity lower bounds for pooling and general GNN architectures that count motifs. For instance, we show a lower bound on the number of subgraphs that need to be encoded.

\end{itemize}


\section{Background}\label{bg}

\paragraph{Message Passing Graph Neural Networks.} 
Let $G= (\mathcal{V}, \calE, X)$ be an attributed graph with $|\mathcal{V}| = n$ nodes. 
Here,  $X_v \in \calX$ denotes the initial attribute   of $v \in \calV$, where $\calX \subseteq \N$ is a (countable) domain.

A typical Message Passing Graph Neural Network (MPNN) first computes a representation of each node, and then aggregates the node representations via a readout function into a representation of the entire graph $G$. 
The representation $h^{(i)}_v$ of each node $v \in \calV$ is computed iteratively by aggregating the representations $h^{(i-1)}_u$ of the neighboring vertices $u$:
\begin{align}
m_v^{(i)} &= \textsc{Aggregate}^{(i)} \Big(\lb h_u^{(i-1)} : u \in \calN(v)\rb  \Big), \quad
h_v^{(i)} =\textsc{Combine}^{(i)} \Big(h_v^{(i-1)} ,   m_v^{(i)} \Big),
\end{align}
for any $v \in \calV$, for $k$ iterations, and with $h_v^{(0)} = X_v$. The \textsc{Aggregate}/\textsc{Combine} functions are parametrized, and   $\lb . \rb $  denotes a multi-set, i.e., a set with (possibly) repeating elements.  A graph-level representation can be computed as  
$h_G = \textsc{Readout}\big(\lb h_v^{(k)} : v \in \calV \rb\big)$,
where \textsc{Readout} is a learnable aggregation function. For representational power, it is important that the learnable functions are injective 
\citep{xu2018powerful}. 

\paragraph{Higher-Order GNNs.}  
To increase the representational power of GNNs, several higher-order GNNs have been proposed. 
In \emph{$k$-GNN}, message passing is applied to $k-$tuples of nodes, inspired by $k$-WL \citep{morris2019weisfeiler}. At initialization, each $k$-tuple is labeled such that two $k$-tuples are labeled differently if their induced subgraphs are not isomorphic. As a result, $k$-GNNs can count (induced) substructures with at most $k$ vertices even at initialization. 
Another class of higher-order networks applies (linear) equivariant operations, interleaved with coordinate-wise nonlinearities, to order-$k$ tensors consisting of the adjacency matrix and input node attributes \citep{maron2018invariant,maron2019provably,maron2019universality}. These GNNs are at least as powerful as $k-$GNNs, and hence they too can count substructures with at most $k$ vertices. All these methods need $\Omega(n^k)$ operations. 
%
%
%
\emph{Local Relational Pooling (LRP)} \citep{chen2020can} was designed for counting and applies relational pooling \citep{murphy2019relational, murphy2019janossy} on local neighborhoods, i.e., one pools over evaluations of a permutation-sensitive function applied to all $k!$ permutations of the nodes in a $k$-size neighborhood of each node.


\section{Other Related Works}\label{rw}

\textbf{Expressive power.}
Several other works have studied the expressive power of GNNs as function approximators \citep{azizian2020characterizing}. \cite{scarselli2009} extend universal approximation from feedforward networks to MPNNs, using the notion of \emph{unfolding equivalence}, i.e., functions on computation trees. Indeed, graph distinction and function approximation are closely related \citep{chen2019equivalence,azizian2020characterizing,keriven2019universal}. 
\cite{maron2019universality} and \cite{keriven2019universal} show that higher-order, tensor-based GNNs  provably achieve universal approximation of permutation-invariant functions on graphs, and
 \cite{loukas2019graph} analyzes expressive power under depth and width restrictions. Studying GNNs from the perspective of local algorithms, \cite{sato2019approximation} show that GNNs can approximate solutions to certain combinatorial optimization problems.





\textbf{Subgraphs and GNNs. }
Having infromation about subgraphs can be quite helpful in various  graph representation algorithms \citep{liu2019n, monti2018motifnet, liu2020neural, yu2020sumgnn, meng2018subgraph, cotta2020unsupervised, alsentzer2020subgraph, huang2020graph}.
For example, for graph comparison (i.e., testing whether a given (possibly large) subgraph exists in the given model), \cite{rex2020neural} compare the outputs of GNNs for small subgraphs of the two graphs.
%
 To improve the expressive power of GNNs, \cite{bouritsas2020improving} use features that are counts of specific subgraphs of interest.
%
Another example is   \citep{vignac2020building}, where   an  MPNN is strengthened by learning local context matrices around vertices.
Recent works have also developed GNNs that pass messages on ego-nets \citep{you2021identity, sandfelder2021ego}.
With motivation from the reconstruction conjecture, \cite{cotta2021reconstruction} process node-deleted subgraphs with individual MPNNs, and then pool them with a DeepSets model to get a representation of the original graph.

 \section{ \ggnn }\label{ps}
 
 Let $ G= (\calV, \calE, X)$ be an attributed input graph with $|\calV| = n$ nodes, and let $h_v^{(0)} = X_v$ be the initial representation of each node $v$.
 %
%
In this work, we study architectures that first find representations of a collection of $m$ subgraphs $G_i$ and then aggregate (pool) over   these representations with a multi-set function, i.e.,
\begin{equation}
    \textsc{Aggregate}( \lb \psi(G_i): i \in [m] \rb),\quad [m] =\{1, \ldots, m\}.
\end{equation}
It is clear that if the $\psi$ count a subgraph structure $H$, then the entire model can count $H$. 
In particular, we aim to apply this strategy to obtain the representations $\psi$, too. To appreciate the challenges in doing so, recall two such examples. First, MPNNs follow this pooling strategy, by iteratively aggregating over node neighborhoods, and then aggregating all node representations into a graph representation. However, it is known that MPNNs can count at most star structures or edges. This is because they represent a local computation tree, which loses structural information about node identities and connectivity. Second, this strategy is at the heart of the Graph Reconstruction Conjecture \citep{kelly1957congruence}, which conjectures that a graph $G$ can be reconstructed from its subgraphs $\lb G_v = G \setminus \{v\} : v \in V(G)\rb$ (Appendix~\ref{sec:reconstruction}). This, however, is unknown for general graphs. Although the $G_v$ retain some structure, we lose information about their \emph{structural relationship}.
In summary, encoding structural information is the key question. 


\begin{SCfigure}[][ht]
\centering
\includegraphics[scale=.4]{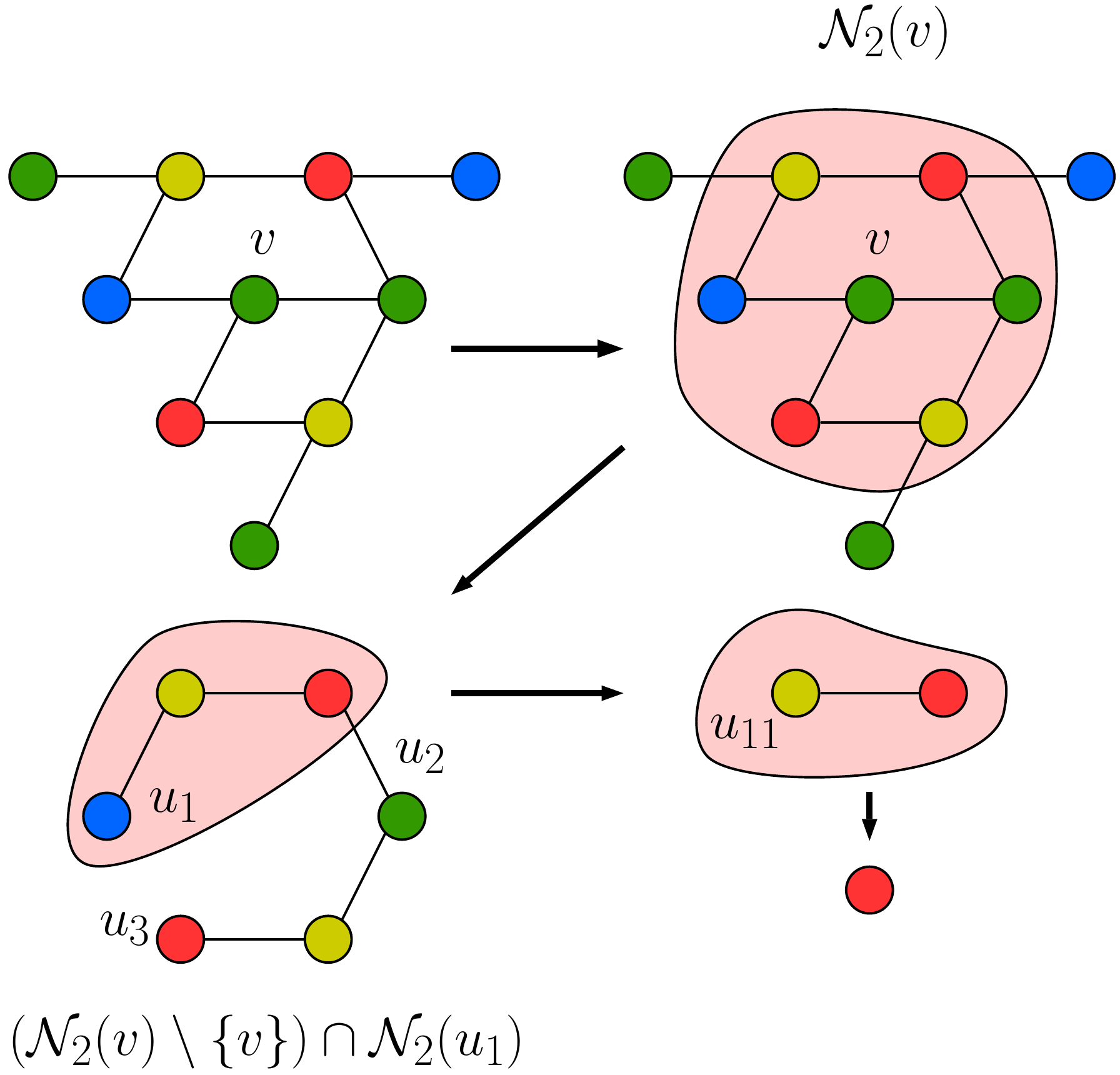}
\caption{Illustration of a Recursive Neighborhood Pooling GNN (RNP-GNN) with recursion parameters $(2,2,1)$.
 To compute the representation of  node $v$ in the given input graph (depicted in the top left of the figure), we first recurse on $G(\calN_2(v)\setminus \{v\})$ (top right of figure).   To do so, we find the representation of each node $u \in G(\calN_2(v)\setminus \{v\})$. 
   For instance, to compute the representation of $u_1$, we  apply an RNP-GNN with recursion parameters  $(2,1)$ and aggregate  $G((\calN_2(v)\setminus \{v\}) \cap ( \calN_2(u_1) \setminus \{u_1\}))$, which is shown in the bottom left of the figure. To do so, we recursively apply an RNP-GNN  with recursion parameter $(1)$ on $G((\calN_2(v)\setminus \{v\}) \cap (\calN_2(u_1) \setminus \{u_1\} ) \cap (\calN_{1}(u_{11})\setminus \{u_{11}\}))$, in the bottom right of the figure.
   }
\label{fig1}
\end{SCfigure}

Hence, to represent the counting function $\psi$ over potentially large neighborhoods, an MPNN does not suffice. But, aggregation over input node attributes, along with edge information, can count edge types, i.e., tiny subgraphs. Hence, we \emph{recursively} apply aggregation on smaller sub-neighborhoods while remembering structural information, with node-wise aggregation as the base case.
For intuition, suppose we aim to count the occurrence of subgraph $H$ in the $r_1$-neighborhood $\calN_{r_1}(v)$ of a node $v$. To do so, we may count $H_v = H \setminus \{v\}$ in the smaller graph $\calN_{r_1}(v) \setminus \{v\}$. But, to combine these counts with the presence of $v$ to ``complete'' $H$, we must know how the $H_v$ are connected to $v$ in the screened graph. Hence, to retain structure information, we mark the neighbors of $v$. 
%
We then recursively call neighborhood pooling to process smaller neighborhoods $\calN_{r_2}(u)$ \emph{within} $\calN_{r_1}(v) \setminus \{v\}$. This could, e.g., learn to count marked versions of $H_v$.
The radii $r$ of neighborhoods may differ in recursive calls. In Section~\ref{sec:expressive}, we relate their size to $H$.

\emph{Recursive neighborhood pooling} $\textsc{RNP-GNN}(G,\{h^{\text{in}}_u\}_{u \in \calV(G)}, (r_1, \ldots, r_{\tau}))$ takes a graph with node features and a sequence of neighborhood radii for different recursions, and returns a set of node encodings $\{h_v\}_{v \in \calV(G)}$. For any $v \in G$, RNP-GNN first constructs $v$'s neighborhood, removes $v$ and marks its neighbors:
\begin{align}
    G_v &\leftarrow \calN_{r_1}(v)\setminus \{v\}, \quad\quad h^{\text{in}}_{u,\text{aug}} = (h^{\text{in}}_u, \ind[ (u,v) \in \calE(G_v)]).
\end{align}
Then we aggregate over subgraph representations.
If $\tau = 1$ (base case), we use the input features:
\begin{equation}
    h_v \leftarrow \textsc{Aggregate}^{(\tau)}(h^{\text{in}}_v, \lb h^{\text{in}}_{u,\text{aug}} : u \in G_v \rb).
\end{equation}
If $\tau > 1$, we recursively represent neighborhoods of nodes in $G_v$:
\begin{align}
    \{\hat{h}_{v,u}\}_{u \in G'} &\leftarrow \textsc{RNP-GNN}\big(G_v, \{h^{\text{in}}_{u,\text{aug}} \}_{u \in G_v}, (r_2,r_3,\ldots,r_{\tau})\big)\\
    h_v &\gets \textsc{Aggregate}^{(\tau)} \Big ( h_v^{\text{in}}, \lb \hat{h}_{u,v}: u \in  G_v\rb \Big)
\end{align}
For aggregation, we can use, e.g., the injective multi-set function from \citep{xu2018powerful}:
\begin{equation}
    \textsc{Aggregate}^{(\tau)} \Big ( h_v, \{h_u\}_{u \in G_v}\Big) = \textsc{MLP}^{(\tau)} \Big ( (1+\epsilon)h_v+\sum\nolimits_{ u \in  G_v  } \hat{h}_{u}\Big).
\end{equation}
The final readout aggregates over the final node representations of the entire graph. Figure \ref{fig1} illustrates an RNP-GNN with recursion parameters $(2,2,1)$, and Appendix~\ref{psc} provides pseudocode.

While MPNNs also encode a representation of a local neighborhood, the recursive representations differ as they take into account \emph{intersections} of neighborhoods. As a result, as we will see in Section~\ref{sec:expressive}, they retain more structural information and are more expressive than MPNNs. Models like $k$-GNN and LRP also compute encodings of subgraphs, and then update the resulting representations via message passing. We can do the same with the neighborhood representations computed by RNP-GNNs to encode more global information, although our representation results in Section~\ref{sec:expressive} hold even without that. 

\section{Expressive Power of Recursive Pooling}\label{sec:expressive}


In this section, we analyze the expressive power of RNP-GNNs.

\begin{figure}[t]
\centering
\includegraphics[scale = 0.3]{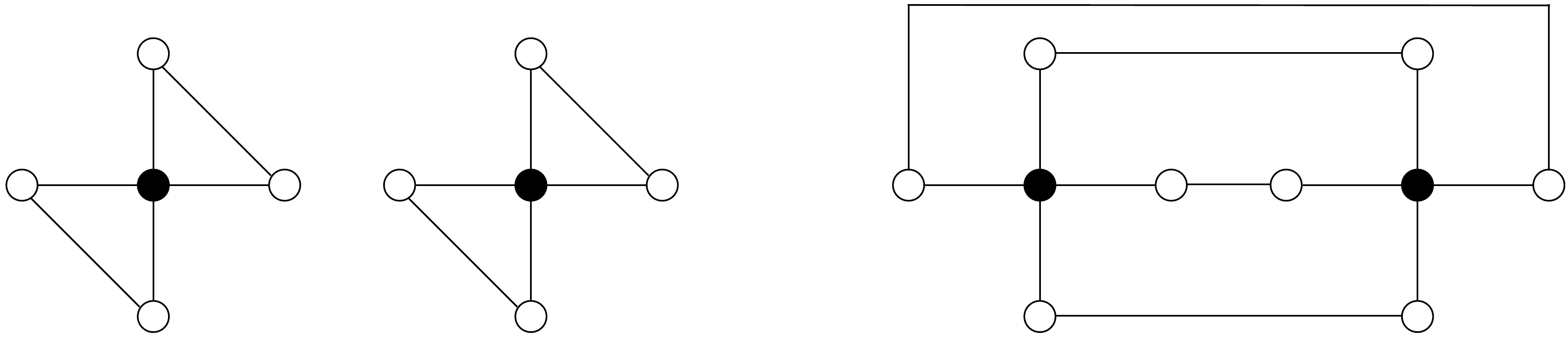}
\caption{MPNNs cannot count substructures with three nodes or more \citep{chen2020can}. For example, the graph with  black center vertex on the left cannot be counted, since the two graphs on the left result in the same node representations as the graph on the right. \label{ref:count}}
\end{figure}

\subsection{Counting (Induced) Substructures}

In contrast to MPNNs, which in general cannot count substructures of three vertices or more \citep{chen2020can}, in this section we prove that for any set of substructures, there is an RNP-GNN that provably counts them.
We  begin with a few definitions. 


\begin{definition}
Let $G,H$ be arbitrary, potentially attributed simple  graphs, where $\calV$ is the set of nodes in $G$. Also, for any $\calS \subseteq \calV$, let $G(\calS)$ denote the subgraph of $G$ induced by $\calS$. The \emph{induced subgraph count function} is defined as
\begin{align}
C(G;H) := \sum\nolimits_{\calS \subseteq \calV} \ind \{G(\calS)\cong H\},
\end{align}
i.e., the number of subgraphs of $G$ isomorphic to $H$. 
\end{definition}
To relate the size of encoded neighborhoods to the substructure $H$, we will need a notion of \emph{covering sequences} for graphs. 
\begin{definition}
Let   $H=(\calV_H,\calE_H)$ be a simple connected graph. For any $\calS \subseteq \calV_H$ and $v \in \calV_H$,  define the covering distance of $v$ from $\calS$ as  
\begin{align}
\bar{d}_H(v;\calS) := \max_{u \in \calS} d(u,v),
\end{align}
where $d(.,.)$ is the shortest-path distance in $H$.
\end{definition}

\begin{definition}
Let $H$ be a simple connected graph on $\tau+1$ vertices. A permutation of vertices, such as $(v_1,v_2,\ldots,v_{\tau+1})$,  is called a \emph{vertex covering  sequence} with respect to a sequence $\mathbf{r}  = (r_1,r_2,\ldots,r_\tau) \in \N^\tau$  called a \emph{covering sequence}  if and only if 
\begin{align}
\bar{d}_{H'_i}(v_i;\calS_i) \le r_i,\label{eq::ref::alg2}
\end{align}
for any $i \in [\tau+1]=\{1,2,\ldots,\tau+1\}$, where $\calS_i = \{v_{i},v_{i+1}\ldots,v_{\tau+1}\}$ and $H'_i = H(\calS_{i})$ is the subgraph of $H$ induced by the set of vertices $\calS_{i}$. We also say that $H$ \emph{admits} the
covering sequence $\mathbf{r}  = (r_1,r_2,\ldots,r_{\tau}) \in \N^\tau$ if there is a vertex covering sequence for $H$ with respect to $\mathbf{r}$.
\end{definition}
In particular, in a covering sequence we first consider the whole graph as a local neighborhood of one of its nodes with radius $r_1$. Then, we remove that node and compute the covering sequence of the remaining graph. Figure \ref{fig:cov} shows an example of  covering sequence computation. An important property, which holds by definition, is that if $\bold{r}$ is a covering sequence for $H$, then any $\bold{r'} \ge \bold{r}$ (coordinate-wise) is also a covering sequence for $H$. 

Note that any connected graph on $k$ nodes admits at least one covering sequence, which is $(k-1,k-2,\ldots,1)$. To observe this fact, note that  in a connected graph, there is at least one node that can be removed and the remaining graph still remains connected. Therefore, we may take this node as the first element of a vertex covering sequence, and inductively find the other  elements. Since the diameter of a connected graph  with $k$ vertices is always bounded by $k-1$, we achieve the desired result. 
However, we will see in the next section that,  when using covering sequences to identify sufficiently powerful RNP-GNNs, it is desirable to have covering sequences with low $r_1$, since the complexity of the resulting RNP-GNN depends on $r_1$. 

More generally,  if $H_1$ and $H_2$ are (possibly attributed) simple graphs on $k$ nodes  and $H_1 \Subset H_2 $, i.e., $H_1$ is a subgraph of $H_2$  (not necessarily induced subgraph), then it follows from the definition that any covering sequence for $H_1$ is also a  covering sequence for $H_2$. 
As a side remark, as illustrated in Figure~\ref{fig4}, covering sequences need not always to be decreasing.
\begin{figure}[t]
\centering
\includegraphics[scale = 0.35]{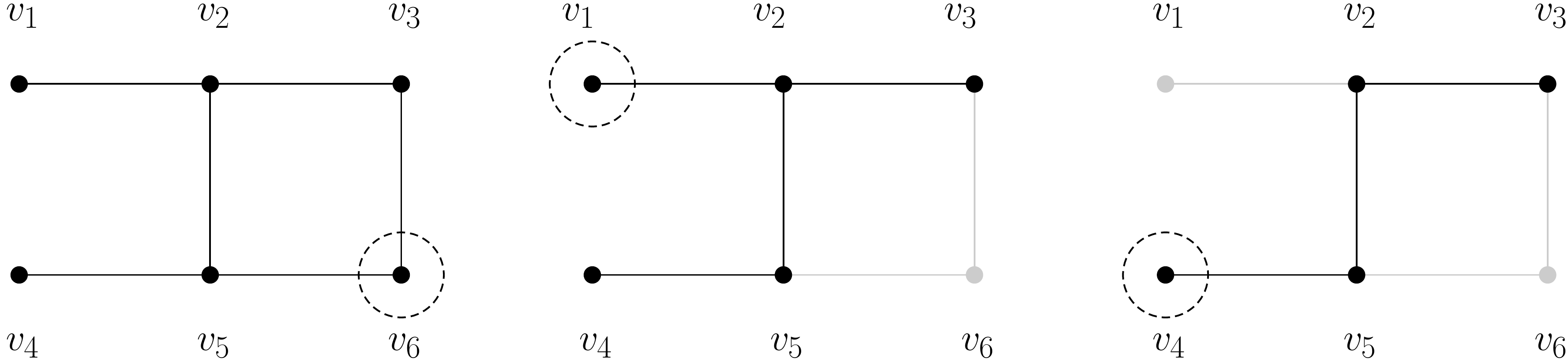}
\caption{Example of a covering sequence  computed for  the  graph on the left. For this graph, $(v_6,v_1, v_4, v_5,v_3,v_2)$ is a vertex covering sequence with respect to the covering sequence $(3, 3, 3, 2,1)$. The first two computations to obtain this covering sequence are depicted in the middle and on the right. \label{fig:cov} }
\end{figure}
Using covering sequences, we can show the following result.

\begin{theorem}\label{th1}
Consider a set of (possibly attributed) graphs  $\calH$  on  $\tau+1$ vertices, such that  any $H \in \calH$  admits the covering sequence  $(r_1,r_2,\ldots,r_\tau)$. Then, there is an RNP-GNN $f( \cdot ;\theta)$ with  recursion parameters   $(r_1,r_2,\ldots,r_\tau)$ that can count any $H \in \calH$. In other words,    for any $H \in \calH$, if  $C(G_1;H)\neq C(G_2;H)$, then  $f(G_1;\theta) \neq f(G_2;\theta)$. 
The same  result also holds for the non-induced subgraph count function. 
\end{theorem}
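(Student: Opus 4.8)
The plan is to exploit the injectivity of the aggregation functions to show that each output node representation $h_v$ faithfully records a \emph{recursive neighborhood structure} rooted at $v$, and then to argue that the covering-sequence hypothesis is precisely what makes every count $C(G;H)$, for $H\in\calH$, extractable from these structures. Concretely, I would split the argument into a \emph{determinacy} lemma and a \emph{counting} lemma, both proved by induction on the recursion depth $\tau$.

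For the determinacy lemma, define recursively the rooted object $\mathcal{T}_{\mathbf r}(G,\{h^{\text{in}}\},v)$: for $\tau=1$ it is the pair consisting of $h^{\text{in}}_v$ together with the multiset $\lb (h^{\text{in}}_u,\ind[(u,v)\in\calE]) : u\in G_v\rb$, where $G_v=\calN_{r_1}(v)\setminus\{v\}$; for $\tau>1$ it is the pair of $h^{\text{in}}_v$ and the multiset $\lb \mathcal{T}_{(r_2,\dots,r_\tau)}(G_v,\{h^{\text{in}}_{\cdot,\text{aug}}\},u) : u\in G_v\rb$, where the marks $\ind[(u,v)\in\calE]$ are folded into the augmented features $h^{\text{in}}_{u,\text{aug}}$ that are fed into the recursive call. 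Since the domain of features is countable and each \textsc{Aggregate} is an injective multiset function in the sense of \citep{xu2018powerful}, a straightforward induction shows that the final representation satisfies $h_v=\phi(\mathcal{T}_{\mathbf r}(G,\{h^{\text{in}}\},v))$ for some injective $\phi$; the injective \textsc{Readout} over $\lb h_v : v\in\calV\rb$ then recovers the multiset $\lb \mathcal{T}_{\mathbf r}(G,v) : v\in\calV\rb$.

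The heart of the proof is the counting lemma. Fix $H\in\calH$ with a vertex covering sequence $(w_1,\dots,w_{\tau+1})$ with respect to $\mathbf r$, and let $E_v(H)$ be the number of attribute-respecting embeddings $\pi$ of $H$ into $G$ (isomorphisms onto induced subgraphs for the induced count, edge-preserving injections for the non-induced count) with $\pi(w_1)=v$. I claim that $E_v(H)$ is a function of $\mathcal{T}_{\mathbf r}(G,v)$, by induction on $\tau$. The first covering condition gives $\bar d_H(w_1;\calV_H)\le r_1$, and since embeddings are distance non-increasing ($d_G(\pi(a),\pi(b))\le d_H(a,b)$ because edges map to edges), the whole image $\pi(\calV_H)$ lies in $\calN_{r_1}(v)$; hence deleting $v$ leaves a copy of $H'_2=H\setminus\{w_1\}$ inside $G_v$, while the edges joining $w_1$ to the rest of $H$ are faithfully recorded by the marks in the augmented features. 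Thus embeddings of $H$ rooted at $v$ correspond bijectively to embeddings of the \emph{mark-constrained} graph $H'_2$ into $G_v$, where each vertex $w_j$ is required to land on a node whose mark equals $\ind[w_j\sim w_1]$. Because $H'_2$ admits $(r_2,\dots,r_\tau)$ via $(w_2,\dots,w_{\tau+1})$ and the same distance non-increasing property holds \emph{within} $G_v$, the inductive hypothesis applies to the recursive call on $G_v$: the number of such constrained embeddings rooted at any $u\in G_v$ is a fixed function of $\mathcal{T}_{(r_2,\dots,r_\tau)}(G_v,u)$. Summing this function over the multiset stored inside $\mathcal{T}_{\mathbf r}(G,v)$ then yields $E_v(H)$, proving the claim; the base case $\tau=1$ is a direct multiset count of marked, correctly attributed neighbors.

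Finally, I would assemble the pieces: summing over roots, $\sum_{v\in\calV}E_v(H)=|\aut(H)|\cdot C(G;H)$ is a fixed positive multiple of the count, so it is determined by $\lb \mathcal{T}_{\mathbf r}(G,v):v\in\calV\rb$ and hence by the graph representation $f(G;\theta)$. Consequently $C(G_1;H)\neq C(G_2;H)$ forces $f(G_1;\theta)\neq f(G_2;\theta)$; and since a single injective network preserves the entire structure $\mathcal{T}_{\mathbf r}$, the same $\theta$ works simultaneously for every $H\in\calH$, while the non-induced case is identical with the edge-preserving notion of embedding. I expect the main obstacle to be the bookkeeping in the counting lemma: verifying that the covering conditions keep every relevant image inside the recursively screened neighborhoods $G_v$, that the marks capture exactly the adjacencies to the deleted root so that mark-constrained embeddings of $H'_2$ recombine into genuine embeddings of $H$, and that the multiset summation collapses to a constant multiple of $C(G;H)$ rather than an uncontrolled over-count.
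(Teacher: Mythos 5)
Your proposal is correct, and its core mechanism is the same as the paper's: an induction on the length of the covering sequence that peels off the first vertex $w_1$ of a vertex covering sequence, uses the covering condition $\bar{d}_H(w_1;\calV_H)\le r_1$ together with the fact that embeddings do not increase distances to confine any copy of $H$ through $v$ to $\calN_{r_1}(v)$, and uses the marks to remember adjacency to the deleted root so that the recursion on the screened graph $G_v$ applies to the marked pattern $H\setminus\{w_1\}$. Where you genuinely differ is the accounting. The paper works with count functions directly through the refinement preorder $\sqsupseteq$ and ``maximally expressive'' networks, proves separate reduction propositions (attributed to unattributed, and non-induced to a finite sum of induced counts over supergraphs $\tilde{H}\Supset H$ on the same vertex set, which admit the same covering sequence), and in its induction sums over \emph{all} vertices whose removal starts some vertex covering sequence, recovering the count only up to the explicit multiplicity $\|\calH^*\|$. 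You instead count \emph{rooted embeddings} along a single fixed vertex covering sequence and normalize by $|\aut(H)|$ only at the end; this sidesteps the $\|\calH^*\|$ bookkeeping and the supergraph reduction entirely, and your explicit determinacy object $\mathcal{T}_{\mathbf r}$ is a transparent substitute for the paper's $\sqsupseteq$-machinery (both ultimately rely on injective aggregation over countable feature domains, which is also what lets one parameter setting $\theta$ handle every $H\in\calH$ simultaneously). One small repair is needed in your non-induced case: the mark constraint there must not be equality, since an edge-preserving injection forces mark $1$ only on images of neighbors of $w_1$, while non-neighbors of $w_1$ must be left unconstrained; alternatively you can adopt the paper's route and write $\bar{C}(\cdot;H)$ as a nonnegative integer combination of induced counts $C(\cdot;\tilde{H})$ over supergraphs $\tilde{H}\Supset H$. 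With that adjustment, your argument is complete.
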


Theorem \ref{th1} states that, with appropriate recursion parameters, any set of (possibly attributed) substructures can be counted   by an RNP-GNN. Interestingly,  induced and non-induced subgraphs can be both counted in RNP-GNNs\footnote{For simplicity, we assume that $\calH$ only contains   $\tau+1$ node graphs. If $\calH$ includes graphs with strictly less than $\tau+1$ vertices, we can simply append a sufficient number of zeros to their covering sequences.}. We prove Theorem~\ref{th1} in Appendix~\ref{app:proof-counting}. The main idea is to show that we can implement the intuition for recursive pooling outlined in Section~\ref{ps} formally with the proposed architecture and multiset functions.

The theorem holds for any covering sequence that is valid for all graphs in $\calH$. For any graph, one can compute a covering sequence by computing a spanning tree, and sequentially pruning the leaves of the tree. The resulting sequence of nodes is a vertex covering sequence, and the corresponding covering sequence can be obtained from the tree too (Appendix~D). A valid covering sequence for all the graphs in $\calH$ is the coordinate-wise maximum of all these sequences.

For large substructures, the sequence $(r_1,r_2,\ldots,r_\tau)$ can be long or include large numbers, and this will affect the computational complexity of RNP-GNNs. For small, e.g., constant-size substructures, the recursion parameters are also small (i.e., $r_i = O(1)$ for all $i$), raising the hope to count these structures efficiently. In particular, $r_1$ is an important parameter. In Section~\ref{sec:complexity}, we analyze the complexity of RNP-GNNs in more detail.


\subsection{A Universal Approximation Result for Local Functions}

Theorem \ref{th1} shows that RNP-GNNs can count substructures if their recursion parameters are chosen carefully. Next, we provide a universal approximation result, which shows  that   they can represent any function related to local neighborhoods  or small subgraphs in a graph.

First, we  recall that for a graph $G$,  $G(\calS)$ denotes the subgraph of $G$  induced by the set of vertices $\calS$.

\begin{definition}
A function $\ell :\G_n \to \R^d$ is called an $r-$local graph function if 
\begin{align}
\ell(G) = \phi (\lb \psi(G(S)) : \calS \subseteq \calV, |\calS| \le r \rb ),
\end{align}
where  $\psi:\G_r \to \R^{d'}$ is a function on graphs  and   $\phi$ is a multi-set function.
\end{definition}

In other words, a local function only depends on  small substructures. 

\begin{theorem}\label{th2}
For any  $r-$local graph function $\ell(.)$, there exists an RNP-GNN $f(.;\theta)$  with  recursion parameters $(r-1,r-2,\ldots,1)$ such that $f(G;\theta) = \ell(G)$ for any $G \in \G_n$. 
\end{theorem}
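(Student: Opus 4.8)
The plan is to reduce the statement to Theorem~\ref{th1} by showing that an $r$-local function depends on $G$ only through a quantity that the RNP-GNN with recursion parameters $(r-1,\ldots,1)$ can already compute, and then to absorb the outer function $\phi$ into the final readout. First I would observe that, for $\ell$ to be a well-defined (permutation-invariant) graph function, $\psi$ may be taken isomorphism-invariant, so that the multiset $M(G) := \lb \psi(G(\calS)) : \calS \subseteq \calV,\ |\calS| \le r \rb$ contains the value $\psi(T)$ with multiplicity exactly $C(G;T)$ for every isomorphism type $T$ on at most $r$ vertices. Hence $M(G)$, and therefore $\ell(G) = \phi(M(G))$, is a fixed function of the vector of induced-subgraph counts $\big(C(G;T)\big)_{T}$ ranging over all types $T$ on at most $r$ vertices. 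It thus suffices to produce an RNP-GNN whose pre-readout representation determines this entire count vector, and to let the readout apply the resulting map into $\ell$.

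Next I would reduce the full count vector to its connected part. The counts $C(G;T)$ over all types on at most $r$ vertices are a fixed polynomial function of the counts over \emph{connected} types on at most $r$ vertices: passing to homomorphism counts (related to induced-subgraph counts by an invertible, size-bounded linear transform via standard Lov\'asz-type identities), homomorphism counts are multiplicative over connected components, so a disconnected type's count is determined by the counts of its (smaller, connected) components. Consequently $\ell(G)$ is a fixed function of $\big(C(G;H)\big)_{H}$ with $H$ ranging only over connected graphs on at most $r$ vertices.

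I would then invoke Theorem~\ref{th1} with $\calH$ equal to the set of all connected (possibly attributed) graphs on at most $r$ vertices. Every such graph admits the covering sequence $(r-1,r-2,\ldots,1)$: a connected graph on $k \le r$ vertices admits $(k-1,\ldots,1)$, which, after padding with zeros, is dominated coordinate-wise by $(r-1,\ldots,1)$, and covering sequences are preserved under coordinate-wise increase. Theorem~\ref{th1} therefore yields an RNP-GNN with recursion parameters $(r-1,\ldots,1)$ that counts every $H \in \calH$; reading its guarantee contrapositively, equal graph representations force equal values of $C(\cdot;H)$ for all $H \in \calH$, so the (injectively aggregated) multiset of final node representations $\lb h_v \rb$ determines the whole connected count vector.

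Finally I would assemble the readout: compose an injective multiset function on $\lb h_v \rb$ (recovering, in particular, the connected counts) with the fixed maps connected counts $\mapsto$ all counts $\mapsto M(G) \mapsto \phi(M(G))$ from the first two paragraphs. Since all domains here are countable, this composite is realizable \emph{exactly} (not merely approximately) as a readout, by the injective-multiset-plus-MLP construction used for Theorem~\ref{th1} and in \citep{xu2018powerful}. The main obstacle is the disconnected case: a disconnected induced subgraph can have components that are arbitrarily far apart, so no single node's recursive neighborhood can witness it, and its count can only be recovered globally through the count-multiplicativity reduction of the second paragraph. Making that reduction exact, and checking that the canonical covering sequence $(r-1,\ldots,1)$ simultaneously serves \emph{all} connected types on at most $r$ vertices, are the two points that require the most care.
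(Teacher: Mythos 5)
Your proposal is correct and reaches the theorem, but it replaces the paper's key lemma with a different one. Both arguments share the same skeleton: reduce $\ell(G)=\phi(M(G))$ to the vector of induced counts $C(G;H)$ over types on at most $r$ vertices (the paper does this exactly as you do, then finishes with its Proposition~\ref{prop_combine}, $\xi\circ f=\ell$, just as your final readout-composition step does), and note that connected types on $k\le r$ vertices admit $(k-1,\ldots,1)\preceq(r-1,\ldots,1)$ so Theorem~\ref{th1} covers them. The divergence is in the disconnected case, which you correctly identify as the crux. The paper proves its Claim~2 by induction on the number of connected components $c$ of $H$: writing $H_1,\ldots,H_c$ for the components, it expands $\prod_{i=1}^{c} C(G;H_i) = C(G;H) + \sum_j c'_j\, C(G;H'_j)$, where the $H'_j$ arise from tuples of vertex sets that overlap or have cross edges, hence have strictly fewer components; the inductive hypothesis plus Proposition~\ref{prop1} then gives $f \sqsupseteq C(G;H)$ (with a symmetry factor $\alpha$ when components repeat). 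You instead pass through homomorphism counts: the Lov\'asz-type transforms between induced counts, injective counts, and $\hom$-counts are invertible and stay within the window of types on at most $r$ vertices, and $\hom$ is multiplicative over components, so connected counts determine everything. These are two packagings of the same combinatorial fact — the paper's product identity is essentially the multiplicativity argument carried out directly at the level of induced counts. Your route buys conceptual transparency and standard machinery, and it handles repeated isomorphic components automatically; its cost is that you must verify the transform chain carefully in this setting (quotients that merge adjacent vertices contribute nothing to simple $G$, the identities extend to attributed graphs, and for fixed $G_1,G_2$ only finitely many attributed types have nonzero count, so all sums are finite — the paper faces the analogous countability point via its Proposition~\ref{prop1}). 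The paper's route buys self-containedness: it never leaves the induced-count language or the $\sqsupseteq$ framework, at the price of a bespoke inclusion--exclusion identity and an induction. Both arguments inherit the same footnoted convention for types on fewer than $r$ vertices (zero-padding of covering sequences), so you are no worse off there. In short: correct, with a genuinely different and defensible proof of the paper's Claim~2; the two points you flag as needing care are exactly the right ones, and both can be discharged.
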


As a result, we can provably learn all the local information  in a graph with  an appropriate RNP-GNN.  Note that we still need recursions, because the function $\psi(.)$ may be an arbitrarily difficult graph function.  However, to achieve the full generality of such a universal approximation result, we need to consider large recursion parameters ($r_1=r-1$) and injective aggregations in the RNP-GNN network. For universal approximation, we may also need high dimensions if fully connected layers are used  for aggregation (see the proof in Appendix~\ref{app:proof-universal} for more details). 

As a remark, for $r=n$, achieving universal approximation on graphs implies solving the graph isomorphism problem. But, in this extreme case, the computational complexity of RNP is
in general not polynomial in $n$.

\subsection{Computational Complexity}
\label{sec:complexity}

\setlength\intextsep{5pt}
\begin{wrapfigure}{r}{0.4\textwidth}
  \begin{center}
    \includegraphics[width=0.36\textwidth]{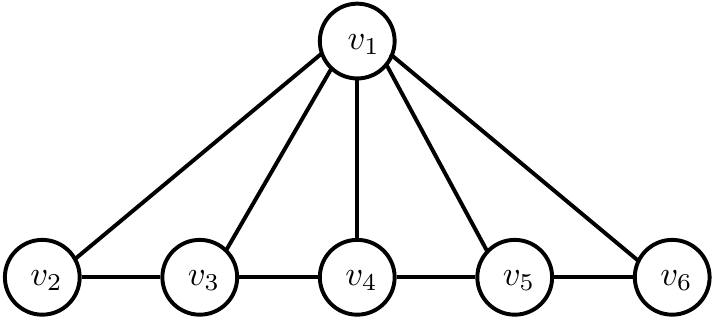}
  \end{center}
  \caption{For the above graph,  $(v_1,v_2,\ldots,v_6)$ is a vertex covering sequence. The corresponding covering sequence $(1,4,3,2,1)$ is not decreasing.}
  \label{fig4}
\end{wrapfigure}
\setlength\intextsep{\oldintextsep}

The computational complexity of RNP-GNNs is graph-dependent. For instance, we need to compute the set of local neighborhoods, which is cheaper for sparse graphs. Moreover, in the recursions, we use intersections of neighborhoods which become smaller and sparser.
\begin{theorem}\label{th3}
Let $f(.;\theta):\G_n \to \R^d$ be an RNP-GNN with recursion parameters $(r_1,r_2,\ldots,r_\tau)$. Assume that the observed graphs $G_1,G_2,\ldots$,  whose representations we compute, satisfy the following property:
$\max_{v \in [n]} |\calN_{r_1}(v)| \le c,$ for a constant $c$. Then the number of node updates in the RNP-GNN  is $O(nc^\tau)$. 
\end{theorem}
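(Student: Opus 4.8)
The plan is to view the execution of the RNP-GNN as a recursion tree and to count node updates level by level, where one \emph{node update} means one evaluation of $\textsc{Aggregate}^{(\cdot)}$ producing a single encoding. Writing $N_i$ for the total number of node updates performed at recursion depth $i$ (with $i=1$ the outermost call, whose parameter sequence has length $\tau$, and $i=\tau$ the base case), the goal reduces to bounding $\sum_{i=1}^{\tau} N_i$.

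The crucial observation, which I would isolate as the main lemma, is a size-propagation invariant: every graph handed to a recursive call has at most $c$ vertices. At the top level, for each of the $n$ nodes $v$ we form $G_v = \calN_{r_1}(v)\setminus \{v\}$, and the hypothesis $\max_{v}|\calN_{r_1}(v)| \le c$ gives $|\calV(G_v)| \le c$. For $\tau > 1$ the recursion runs on $G_v$ with parameters $(r_2,\ldots,r_\tau)$, and every neighborhood it subsequently extracts, namely $\calN_{r_2}(u)$ computed \emph{inside} $G_v$, is an induced subgraph of $G_v$ and hence again has at most $c$ vertices. Iterating this, no assumption on the radii $r_2,\ldots,r_\tau$ is needed: once a subgraph of size $\le c$ is reached, all of its descendants in the recursion tree are subgraphs of it and therefore also have size $\le c$. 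This is exactly the step I expect to be the main (if modest) obstacle, since the theorem only assumes a bound on $r_1$-neighborhoods in the original graph; the point is that the bound is inherited by all deeper neighborhoods precisely because they are induced subgraphs of an already-bounded graph.

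Given the invariant, the level-by-level count is a direct induction. At depth $1$ we perform one aggregation per node, so $N_1 = n$. Each of these $n$ calls spawns a recursive call on a graph with at most $c$ vertices, and that call performs one aggregation per vertex, so $N_2 \le nc$. In general, each depth-$i$ aggregation is associated with a subgraph of size $\le c$ whose vertices are aggregated at depth $i+1$; this yields the recurrence $N_{i+1} \le c\,N_i$, and hence $N_i \le n\,c^{\,i-1}$ for all $1 \le i \le \tau$.

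Summing over the $\tau$ recursion levels then gives
\begin{equation}
\sum_{i=1}^{\tau} N_i \;\le\; \sum_{i=1}^{\tau} n\,c^{\,i-1} \;=\; n\,\frac{c^{\tau}-1}{c-1} \;=\; O\!\left(n\,c^{\tau}\right),
\end{equation}
which proves the claim; in fact one obtains the slightly sharper $O(n c^{\tau-1})$. The only remaining bookkeeping is to confirm that each aggregation over a multiset of size $\le c$ counts as $O(1)$ in the sense tracked here, and that the construction of the neighborhoods $\calN_{r_1}(v)$ is accounted for separately from node updates, consistent with the theorem's statement.
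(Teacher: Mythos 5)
Your proof is correct and takes essentially the same route as the paper's: both count updates through the recursion tree and rest on the same key observation that every graph handed to a deeper call is contained in the outermost $r_1$-neighborhood (the paper phrases this via the intersection sizes $\lambda_i = \max |\calN_{r_1}(v_1)\cap \calN_{r_2}(v_2) \cap \cdots \cap \calN_{r_i}(v_i)|$, which are decreasing in $i$ and hence at most $c$). Your level-by-level geometric summation even yields the marginally sharper $O(nc^{\tau-1})$, where the paper simply bounds the count by $n\prod_{i=1}^{\tau}\lambda_i \le nc^{\tau}$.
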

In other words, if $c = n^{o(1)}$ and $\tau=O(1)$, 
then  RNP-GNN requires relatively few updates (that is, $n^{1+o(1)}$). If the maximum degree of the given graphs is $\Delta$, then  $c = O(r_1\Delta^{r_1})$. Therefore, similarly, if $\Delta = n^{o(1)}$ then we can count with at most $n^{1+o(1)}$ updates. 
Additional gains may arise from rapidly shrinking neighborhoods, which are not yet accounted for in Theorem~\ref{th3}. To put this in context, the higher-order GNNs based on tensors or $k$-WL would operate on tensors of order $n^{\tau+1}$.

\begin{wraptable}{r}{0.6\textwidth}
    \parbox{0.6\textwidth}{
    \centering
    \vspace{-10pt}
    \caption{\small Time complexity of various models. $\Delta$ is the max-degree, and '$-$' means the complexity  is not  polynomial in $n$. }
    \label{tab:exp2}
    \scalebox{0.8}{
    \begin{tabular}{lcccc}
    \toprule
         Model & worst-case &  $\Delta = n^{o(1)}$ & $\Delta = O(\log(n))$ &  $\Delta = O(1)$ \\
         \midrule
         LRP & $-$ &$-$ & $-$ & $O(n)$ \\
         $k-$WL & $ n^k$ & $ n^k$ & $n^k$ & $n^k$\\
         RNP & $n^k$ & $n^{1+o(1)}$ & $\tilde{O}(n)$ & $O(n)$\\ 
         \bottomrule
    \end{tabular}
    }
    }
\end{wraptable}

The above results show that when using RNP-GNNs with sparse graphs, we can represent functions of  substructures with $k$ nodes without requiring $k-$order tensors.
LRPs also encode neighborhoods of distance $r_1$ around nodes. In particular, all $c!$ permutations of the nodes in a neighborhood of size $c$ are considered to obtain the representation. As a result, 
LRP networks only have polynomial complexity if $c = o(\log(n))$. Thus, RNP-GNNs can provide an exponential improvement in terms of the tolerable size $c$ of neighborhoods with distance $r_1$ in the graph. 


Moreover, Theorem~\ref{th3} suggests to aim for small $r_1$. The other $r_i$'s may be larger than $r_1$, as shown in Figure~\ref{fig4}, but do not affect the upper bound on the complexity. 


\section{An Information-Theoretic Lower Bound}

In this section, we provide a general information-theoretic lower bound for graph representations that encode a given graph $G$ by first encoding a number of (possibly small) graphs $G_1,G_2,\ldots,G_t$ and then aggregating the resulting representations. The sequence of graphs $G_1,G_2,\ldots,G_t$ may be obtained in an arbitrary way from $G$. For example, in an MPNN, $G_i$ can  be the computation tree (rooted tree) at node $i$. As another example, in LRP, $G_i$ is the local neighborhood around node $i$.

Formally, consider a graph representation $f(.;\theta):\G_n \to \R^d$ as 
\begin{align}
f(G;\theta) = \textsc{Aggregate}( \lb \psi(G_i): i \in [t] \rb),\quad [t] =\{1, \ldots, t\}
\end{align}
for any $G \in \G_n$, where \textsc{Aggregate} is a multi-set function,  $(G_1,G_2,\ldots,G_t) = \Xi (G)$ where 
 $\Xi(.):\G_n \to \big (\bigcup_{m=1}^\infty \G_m \big)^t$ is a function from one graph to $t$ graphs, and $\psi: \bigcup_{m=1}^\infty \G_m \to [s]$ is a function on graphs taking $s$ values. In short, we encode $t$ graphs, and each encoding takes one of $s$ values. We call this graph representation function an $(s,t)$-good graph representation.

 \begin{theorem}\label{theorem4}
 Consider a parametrized class of $(s,t)-$good representations $f(.;\theta):\G_n \to \R^d$ that is able to count any (not necessarily induced\footnote{The theorem also holds for induced subgraphs, with/without node attributes.}) substructure with $k$ vertices. More precisely, for any graph $H$ with $k$ vertices, there exists $f(.;\theta)$ such that if $C(G_1;H)\neq C(G_2;H)$, then  $f(G_1;\theta) \neq f(G_2;\theta)$.
 Then\footnote{$\tilde{\Omega}(m)$ is $\Omega(m)$ up to poly-logarithmic  factors.  }  $t = \tilde{\Omega}(n^{\frac{k}{s-1}})$. 
 \end{theorem}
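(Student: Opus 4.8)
The plan is to turn the statement into a pigeonhole bound on the number of distinct outputs an $(s,t)$-good representation can produce, and then to exhibit a single $k$-vertex substructure whose count function already takes nearly $n^k$ distinct values. Throughout I treat $s$ and $k$ as constants and measure everything against $n$.

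First I would fix the substructure to be the clique $H = K_k$ and take the parameter $\theta$ witnessing that $f(\cdot;\theta)$ counts $K_k$; this fixes the three components $\Xi$, $\psi$ and $\textsc{Aggregate}$. Because $\textsc{Aggregate}$ is a multi-set function and each $\psi(G_i)$ lies in $[s]$, the value $f(G;\theta)$ depends on $G$ only through the multi-set $\lb \psi(G_1),\dots,\psi(G_t)\rb$ of its $t$ encodings. The number of size-$t$ multi-sets over an $s$-letter alphabet is $\binom{t+s-1}{s-1}$, so
\[
\big|\{\, f(G;\theta) : G \in \G_n \,\}\big| \;\le\; \binom{t+s-1}{s-1} \;=\; O\!\big(t^{\,s-1}\big).
\]
Since $f(\cdot;\theta)$ must give distinct outputs to every pair of graphs with distinct $K_k$-counts, the number of distinct values attained by $C(\cdot;K_k)$ on $\G_n$ is at most $\binom{t+s-1}{s-1}$.

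The second and main step is the matching lower bound: I would show that $C(\cdot;K_k)$ realizes $\tilde\Omega(n^k)$ (indeed $\Theta_k(n^k)$) distinct values on $n$-vertex graphs. My construction uses $k$ disjoint \emph{digit gadgets}. For $i \in [k]$, let $\Gamma_i$ consist of a complete $(k-1)$-partite ``spine'' with parts of size $t_i$, together with $a_i$ extra vertices that are each joined to every spine vertex and are pairwise non-adjacent. A $K_k$ inside $\Gamma_i$ must use one vertex from each spine part (giving a $K_{k-1}$, $t_i^{k-1}$ choices) completed by exactly one extra vertex, so $\Gamma_i$ contains precisely $a_i\, t_i^{k-1}$ copies of $K_k$. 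Taking the gadgets vertex-disjoint with no edges between them and recalling that $K_k$ is connected, the whole graph has count $\sum_{i=1}^k a_i W_i$ with weights $W_i = t_i^{k-1}$. I would then choose the $t_i$ geometrically increasing so that the $W_i$ are super-increasing, e.g.\ $W_i \ge L\sum_{j<i} W_j$ with $L = \Theta_k(n)$; this makes the map $(a_1,\dots,a_k)\mapsto \sum_i a_i W_i$ injective on $\{0,\dots,L-1\}^k$. A vertex count shows the largest gadget dominates, using $\Theta_k(L)$ vertices, so $L=\Theta_k(n)$ fits within $n$ vertices and yields $L^k = \Theta_k(n^k)$ distinct counts. (For $H=K_k$ the induced and non-induced counts coincide, so this handles both cases.) Combining the two steps, $\binom{t+s-1}{s-1} = \tilde\Omega(n^k)$, hence $t^{\,s-1} = \tilde\Omega(n^k)$ and $t = \tilde\Omega\!\big(n^{k/(s-1)}\big)$.

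I expect the genuine work to be Step two: producing $\tilde\Omega(n^k)$ graphs with \emph{pairwise distinct} clique counts. The pigeonhole half (Step one) is essentially immediate once one observes that a multi-set aggregator can output at most $\binom{t+s-1}{s-1}$ values. The delicate point in the construction is guaranteeing injectivity of the weighted digit sum while keeping the total vertex count below $n$; the super-increasing choice of the spine sizes $t_i$ is what reconciles these two demands, and verifying that the gadget count is \emph{exactly} $a_i t_i^{k-1}$ (no stray cliques across gadgets or among the pendant vertices) is the main place care is needed. The $\tilde{}$ in the conclusion only has to absorb the $k,s$-dependent constants and any polylog slack, all of which survive being raised to the fixed power $1/(s-1)$.
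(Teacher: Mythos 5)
Your proof is correct, and it shares the paper's two-step skeleton: fix $H = K_k$ with a witnessing parametrization, bound the number of possible outputs of an $(s,t)$-good representation by the number of size-$t$ multisets over $[s]$ (you use $\binom{t+s-1}{s-1} = O(t^{s-1})$; the paper uses the equivalent bound $(t+1)^{s-1}$), and then lower-bound the number of distinct values of $C(\cdot;K_k)$ on $\G_n$ by $\tilde\Omega(n^k)$. Where you genuinely diverge is in the construction behind that counting lower bound (the paper's Claim~\ref{claim_countring}). The paper builds a single complete multipartite graph whose $k$ parts have distinct prime sizes $p_{b_1},\dots,p_{b_k} < n/k$, so that $C(G_{\calB};K_k) = \prod_{i} p_{b_i}$ and unique factorization makes $\calB \mapsto C(G_{\calB};K_k)$ injective; the prime number theorem then yields $\binom{m}{k} = \tilde\Omega(n^k)$ values, the polylogarithmic loss coming from the density of primes. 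You instead build a positional numeral system: $k$ vertex-disjoint gadgets contributing counts $a_i W_i$ with super-increasing weights $W_i = t_i^{k-1}$, so the total count $\sum_i a_i W_i$ is injective in the digit vector $(a_1,\dots,a_k) \in \{0,\dots,L-1\}^k$ with $L = \Theta_k(n)$. Your verification of the gadget count is sound: the extra vertices form an independent set so each clique uses at most one of them, each spine part contributes at most one vertex so exactly one extra vertex and one vertex per part are forced, and connectivity of $K_k$ confines every copy to a single gadget; the vertex budget also works out since the spine sizes grow geometrically with the largest of order $L$, so the whole construction fits in $n$ vertices after padding with isolated vertices (which do not affect the count). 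Your route avoids the prime number theorem entirely and in fact gives $\Omega_k(n^k)$ distinct counts with no logarithmic loss, marginally stronger than the paper's bound, at the price of a somewhat longer construction; the paper's prime-product graph is quicker to state and verify. One cosmetic point: you reuse $t_i$ for the spine part sizes while $t$ already denotes the number of encoded graphs in the theorem --- rename one of them to avoid confusion.
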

 
In particular, for any $(s,t)-$good graph  representation with $s=2$, i.e., binary encoding functions, we need $\tilde{\Omega}(n^k)$ encoded graphs. This implies that, for $s=2$, enumerating all subgraphs and deciding for each whether it equals $H$ is near optimal. Moreover, if $s\le k$, then $t=\Theta(n)$ small graphs would not suffice to enable counting. 

More interestingly, if $k,s = O(1)$, then it is impossible to perform the substructure counting task with $t= O(\log(n))$. As a result, in this case, considering $n$ encoded graphs (as is done in GNNs or LRP networks) cannot be exponentially improved. 

The lower bound in this section is   information-theoretic and hence applies to any algorithm. It may be possible to strengthen it by considering computational complexity, too. For binary encodings, i.e., $s=2$, however, we know that the bound cannot be improved since manual counting of subgraphs matches the lower bound. 


 \section{Time Complexity Lower Bounds for Counting Subgraphs}
 
 In this section, we put our results in the context of known hardness results for subgraph counting.
%
%
In general, the subgraph isomorphism problem  is known to be NP-complete. Going further, 
the Exponential Time Hypothesis (ETH) is a conjecture in complexity theory 
\citep{impagliazzo2001complexity}, and states that several NP-complete problems cannot be solved in sub-exponential time.  ETH, as a stronger version of the $P\neq NP$ problem,  is widely believed to hold.  
Assuming that ETH holds, the $k-$clique detection problem requires at least $n^{\Omega(k)}$ time \citep{chen2005tight}. This means that if a graph representation can count \textit{any} subgraph $H$ of size $k$, then computing it requires at least $n^{\Omega(k)}$ time.
%
%
%
\begin{corollary}\label{cor:eth}
  Assuming the ETH conjecture holds, any graph representation that can count any substructure $H$ on $k$ vertices with appropriate parametrization 
  needs $n^{\Omega(k)}$ time to compute. 
\end{corollary}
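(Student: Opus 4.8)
The plan is to reduce $k$-clique detection to computing such a representation, and then invoke the ETH-based hardness of clique detection quoted just above the statement. First I would specialize the substructure to $H = K_k$, the complete graph on $k$ vertices, and fix a parametrization $\theta$ for which $f(\cdot;\theta)$ counts $K_k$; this exists by hypothesis, in the counting sense made precise in Theorem~\ref{theorem4}, namely $C(G_1;K_k)\neq C(G_2;K_k)$ implies $f(G_1;\theta)\neq f(G_2;\theta)$. The key observation is that this property makes the count a function of the representation: by contraposition, $f(G_1;\theta)=f(G_2;\theta)$ forces $C(G_1;K_k)=C(G_2;K_k)$, so there is a fixed decoding map $g$ with $C(G;K_k) = g\bigl(f(G;\theta)\bigr)$ for every $G \in \G_n$.

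Next I would note that $k$-clique detection is immediate from this count, since $G$ contains a $k$-clique if and only if $C(G;K_k) \ge 1$. Hence any procedure that computes $f(G;\theta)$ and then tests positivity of $g$ decides $k$-clique detection. Concretely, assume for contradiction that the representation can be computed in time $n^{o(k)}$. Then the composite algorithm would detect a $k$-clique in time $n^{o(k)}$ plus the cost of the decoding step, contradicting the bound of $n^{\Omega(k)}$ for clique detection under ETH \citep{chen2005tight}. Therefore computing the representation must itself require $n^{\Omega(k)}$ time.

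The step that needs the most care is the decoding: I must argue that recovering the positivity of the count from $f(G;\theta)$ does not secretly hide the hardness, i.e.\ that this post-processing costs at most $n^{o(k)}$, so that the contradiction is genuinely charged to the representation computation rather than to $g$. One clean way to handle this is to avoid evaluating $g$ in full and instead compare $f(G;\theta)$ against the representation values attained on $K_k$-free graphs: by the counting property the representation images of clique-free and clique-containing graphs are disjoint, so detection reduces to a membership test that can be folded into the reduction. Alternatively, one treats ``counting'' as the entire pipeline and charges the whole $n^{\Omega(k)}$ lower bound to it, which already forces the representation computation---the only input-size-dependent stage---to dominate asymptotically. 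I would also remark that the choice $H=K_k$ is exactly what lets us borrow the sharp clique lower bound of \citep{chen2005tight}; any family of $k$-vertex subgraphs whose detection is ETH-hard would serve equally well, and this is why the hypothesis asks that the representation count \emph{any} substructure on $k$ vertices.
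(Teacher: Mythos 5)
Your proposal is correct and takes essentially the same route as the paper, which proves the corollary only by the sentence preceding it: specialize to $H = K_k$, observe that a representation counting $K_k$ decides whether $C(G;K_k)\ge 1$ and hence detects $k$-cliques, and invoke the ETH-based $n^{\Omega(k)}$ lower bound of \citep{chen2005tight}. Your extra care about the decoding map $g$ addresses a subtlety the paper silently elides (it charges the whole representation-plus-decoding pipeline with the clique lower bound), but this refinement does not change the argument.
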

The above bound matches the $O(n^k)$ complexity of the higher-order GNNs. Comparing with Theorem~\ref{theorem4} above, Corollary~\ref{cor:eth} is more general, while Theorem~\ref{theorem4} has fewer assumptions and offers a refined result for aggregation-based graph representations.

Given that Corollary~\ref{cor:eth} is a \textit{worst-case} bound, a natural question is whether we can do better for subclasses of graphs. Regarding $H$, even if $H$ is a random Erd\"os-R\'enyi graph, it can only be counted in  $n^{\Omega(k/\log{k})}$ time \citep{dalirrooyfard2019graph}.



Regarding the input graph in which we count, consider two classes  of sparse graphs: \emph{strongly sparse graphs} have maximum degree  $\Delta  = O(1)$, and \emph{weakly sparse graphs} have average degree $\bar{\Delta} = O(1)$.  We argued in Theorem~\ref{th3} that RNP-GNNs achieve almost \textit{linear} complexity for the class of strongly sparse graphs. For weakly sparse graphs, in contrast, the complexity of RNP-GNNs is generally not linear, but still polynomial, and can be much better  than $O(n^k)$.  One may ask whether it is possible to achieve a learnable graph representation such that its complexity for weakly sparse graphs is still linear.
Recent results in complexity theory imply that this is impossible:
\begin{corollary}[\cite{gishboliner2020counting, bera2019linear, bera2020nearlinear}]
There is no graph representation algorithm that runs in linear time on weakly sparse graphs and is able  to count any substructure $H$ on $k$ vertices (with appropriate parametrization). 
\end{corollary}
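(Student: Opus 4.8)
The plan is to reduce the general claim to a hardness result for counting a single, carefully chosen $k$-vertex pattern $H^\star$, exactly as in the proof of Corollary~\ref{cor:eth}. Concretely, I would fix $H^\star$ to be a $k$-vertex pattern whose counting on bounded-degeneracy graphs is known to require super-linear time: for $k=6$ the canonical choice is the cycle $C_6$, which sits at the ``chasm at size six'' boundary established in \cite{bera2019linear, bera2020nearlinear, gishboliner2020counting}, and for larger $k$ one takes the analogous hard patterns from the same line of work. The key structural observation, needed to place the hard instances inside our graph class, is that a graph of degeneracy $d$ has $m \le dn$ edges and hence average degree $\bar{\Delta} \le 2d$; thus every bounded-degeneracy family is \emph{weakly sparse} in the sense of Theorem~\ref{th3}. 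So the instances realizing the cited lower bound are precisely the kind of graphs on which the hypothetical algorithm is assumed to run in linear time.

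Next I would run the contradiction argument. Suppose some graph representation $f(\cdot;\theta):\G_n \to \R^d$ both runs in linear time on weakly sparse graphs and, under an appropriate parametrization, can count every $k$-vertex substructure. Instantiate $\theta$ for the pattern $H^\star$ above, so that $C(G_1;H^\star)\neq C(G_2;H^\star)$ implies $f(G_1;\theta)\neq f(G_2;\theta)$. Mirroring the reasoning just below Corollary~\ref{cor:eth}, the ability to separate different $H^\star$-counts in particular separates count zero from a positive count, i.e. computing $f$ solves (at least) the detection problem ``does $G$ contain a copy of $H^\star$?'' on weakly sparse graphs. Hence a single linear-time evaluation of $f$ would yield a linear-time detection (indeed counting) algorithm for $H^\star$ on bounded-degeneracy graphs.

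This contradicts the cited results: under the fine-grained hardness assumptions driving the chasm-at-six bound (notably the conjectured hardness of triangle detection), counting or detecting $H^\star$ on bounded-degeneracy, hence weakly sparse, graphs requires $m^{1+\Omega(1)} = n^{1+\Omega(1)}$ time, which is strictly super-linear. Since our graph is weakly sparse, $m = O(n)$, so the assumed running time $O(n)$ is incompatible with this lower bound, completing the contradiction.

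The main obstacle, as in Corollary~\ref{cor:eth}, is not the counting-to-detection step (which is immediate) but correctly aligning three ingredients: the hard pattern must live on exactly $k$ vertices, the lower-bound instances must be certified weakly sparse (handled by the degeneracy-to-average-degree bound above), and the conditional nature of the assumption must be stated explicitly, so that the result holds under the same fine-grained hypotheses as in \cite{bera2019linear, gishboliner2020counting}, analogous to the role ETH plays in Corollary~\ref{cor:eth}. A secondary subtlety is that the cited bounds are typically phrased for homomorphism or (non-induced) subgraph counts; since the counting property in Theorem~\ref{theorem4} is already stated for non-induced counts this matches directly, and one would only need the standard inclusion-exclusion translation if an induced version were desired.
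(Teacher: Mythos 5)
Your proposal is correct and takes essentially the same route as the paper, which offers no proof of this corollary beyond the citations: the intended argument is exactly your reconstruction --- fix a hard $k$-vertex pattern such as $C_6$ from the chasm-at-size-six results, observe that the bounded-degeneracy hard instances have $m \le dn$ edges and hence average degree at most $2d$ (so they are weakly sparse), and conclude that a linear-time counting representation on weakly sparse graphs would contradict the lower bounds of \cite{bera2019linear, bera2020nearlinear, gishboliner2020counting}, which are conditional on the conjectured hardness of triangle detection, just as you state. Your glossing of how a count-distinguishing representation yields an effective counting/detection procedure is no looser than the paper's own ``with appropriate parametrization'' clause in this corollary and in Corollary~\ref{cor:eth}, so nothing is missing relative to the paper's treatment.
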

Hence, RNP-GNNs are close to optimal for several cases of  counting substructures with parametrized learnable functions. 

\section{Experiments}

In this section, we validate our theoretical findings via numerical experiments.
Here, we briefly describe our experimental setup and results --- further experimental details are given in Appendix~\ref{appendix:experiments}.

\begin{table}[ht]
\vspace{-8pt}
\parbox{.55\linewidth}{
    \centering
    \caption{\small Numerical results for counting induced triangles and non-induced $3$-stars, following the setup of \cite{chen2020can}. We report the test MSE divided by variance of the true counts of each substructure (lower is better). The best three models for each task are bolded. 
    }
    \vspace{1mm}
    \label{tab:counting1}
    \scalebox{0.8}{
    \begin{tabular}{lccccc}
    \toprule
         & \multicolumn{2}{c}{Erd\H{o}s-Renyi} &&  \multicolumn{2}{c}{Random Regular} \\
         \cmidrule[0.25pt]{2-3}\cmidrule[0.25pt]{5-6}
         & triangle &  $3$-star && triangle &  $3$-star \\
        \midrule
        GCN  & 6.78E-1  &  4.36E-1  && 1.82  & 2.63  \\
        GIN  & 1.23E-1  &  1.62E-4  && 4.70E-1  & 3.73E-4  \\
       GraphSAGE & 1.31E-1  & \bf 2.40E-10  && 3.62E-1  & \bf 8.70E-8 
       \\
       sGNN  & 9.25E-2  &  2.36E-3  && 3.92E-1  & 2.37E-2   \\
       2-IGN  & 9.83E-2  &  5.40E-4  && 2.62E-1  & 1.19E-2   \\
       PPGN & \bf 5.08E-8  &  4.00E-5  && \bf 1.40E-6  & 8.49E-5  \\
       LRP-1-3 & 1.56E-4  &  2.17E-5  && 2.47E-4  & \bf 1.88E-6  \\
       Deep LRP-1-3 & \bf 2.81E-5  &  \bf 1.12E-5  && \bf 1.30E-6  & \bf 2.07E-6  \\
       RNP-GNN & \bf 1.39E-5 & \bf 1.39E-5 && \bf 2.38E-6 & 1.50E-4 \\
      \bottomrule
    \end{tabular}
    }
    }
    \hfill
    \parbox{.4\textwidth}{
    \centering
    \vspace{-10pt}
    \caption{\small Test accuracy on the EXP dataset with setup as in \cite{abboud2021surprising}. 
    }
    \label{tab:exp}
    \scalebox{0.8}{
    \begin{tabular}{ll}
    \toprule
         Model & Accuracy (\%)  \\
         \midrule
         GCN-RNI & 98.0 $\pm$ 1.85  \\
         PPGN & 50.0 \\
         1-2-3-GCN-L & 50.0 \\
         3-GCN & \bf 99.7 $\pm$ 0.004 \\
         RNP-GNN $(r_1=1)$ &  50.0 \\
         RNP-GNN $(r_1=2)$ & \bf 99.8 $\pm$ 0.005 \\
         \bottomrule
    \end{tabular}
    }
    }
\end{table}


\textbf{Counting substructures.}
First, we follow the experimental setup of \cite{chen2020can} on tasks for counting substructures. In Table~\ref{tab:counting1}, we report results for learning the induced subgraph count of triangles and non-induced subgraph count of $3$-stars. Our RNP-GNN model is consistently within the best performing models for these counting tasks, thus validating our theoretical results. Based on the baseline results taken from \citep{chen2020can}, RNP-GNN tends to widely outperform MPNNs (GCN~\citep{kipf2017semi}, GIN~\citep{xu2018powerful}, GraphSAGE~\citep{hamilton2017inductive}), and other models not tailored for counting: spectral GNN~\citep{chen2018supervised}, and 2-IGN~\citep{maron2018invariant}. Also, RNP-GNN often beats higher-order GNNs: PPGN~\citep{maron2019provably} and LRP-1-3~\citep{chen2020can}. RNP-GNN is mostly comparable to Deep LRP-1-3, though Deep LRP-1-3 outperforms it in a few cases. Recall that Deep LRP-1-3 is a practical version of LRP --- we leave further developments of practical variants of RNP-GNN to future work. 

\textbf{Satisfiability of propositional formulas.} Second, we test the expressiveness of our model in distinguishing non-isomorphic graphs that 1-WL cannot distinguish. The EXP dataset \citep{abboud2021surprising} for classifying whether certain propositional formulas are satisfiable requires higher than 1-WL expressive power to achieve better than random accuracy. As shown in Table~\ref{tab:exp}, while our RNP-GNN with $r_1 = 1$ is unable to achieve better than random accuracy, our RNP-GNN with $r_1 = 2$ achieves near perfect accuracy --- beating all other models based on results taken from~\citep{abboud2021surprising}. These other models include universal models with random node identifiers (GCN-RNI~\citep{abboud2021surprising}), GNNs with 3-WL power (PPGN~\citep{maron2019provably}), and GNNs that imitate some (possibly weaker) version of 3-WL (1-2-3-GCN-L~\citep{morris2019weisfeiler}, 3-GCN~\citep{abboud2021surprising}). Thus, our architecture, which is not developed within common frameworks for achieving $k$-WL expressiveness, is in fact powerful at distinguishing non-isomorphic graphs.

  \section{Conclusion}\label{conclusion}
In this paper, we studied the theoretical possibility of counting substructures (induced subgraphs) by a graph neural network that pools encodings of graphs derived from the input graph. In particular, we show that \emph{recursive} pooling can achieve counting power without additional representation techniques via other architectures. This approach is different from previous works, and the insights for how it captures structural information --- neighborhood intersections and marking --- is potentially of more general interest for designing future architectures.
Our second set of results establishes general lower bounds on the complexity of GNNs that can count substructures of a given size, and puts the positive results in context.

\appendix


\section{Proof of Theorem \ref{th1}}

\subsection{Preliminaries}\label{sec_pre}

Let us first state a few definitions about the graph functions. Note that for any graph function  $f:\G_n \to \R^d$, we  have  $f(G)= f(H)$ for any  $G \cong H$. 

\begin{definition}
Given two graph functions $f,g:\G_n\to \R^d$, we write $f \sqsupseteq g   $, if and only if for any $ G_1,G_2 \in \G_n$,
\begin{align}
\forall G_1,G_2\in G_n : 
g(G_1) \neq g(G_2) \implies  f(G_1) \neq f(G_2),  
\end{align}
or, equivalently, 
\begin{align}
\forall G_1,G_2\in G_n : 
f(G_1) = f(G_2) \implies  g(G_1) = g(G_2).  
\end{align}
\end{definition}

\begin{proposition} 
Consider graph functions $f,g,h:\G_n \to \R^d$ such that $ f \sqsupseteq g$ and $ g \sqsupseteq h$. Then, $ f \sqsupseteq h$. In other words, $\sqsupseteq$ is transitive. 
\end{proposition}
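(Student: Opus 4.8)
The plan is to unwind the definition and chain the two hypothesized implications together; the relation $\sqsupseteq$ is essentially a refinement relation between the partitions of $\G_n$ induced by the level sets of the functions, and transitivity of refinement is what we are really verifying. Concretely, I would work with the contrapositive (second) form of the definition, since it phrases everything as equalities of function values, which compose cleanly.

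First I would fix arbitrary $G_1, G_2 \in \G_n$ and assume the antecedent $f(G_1) = f(G_2)$, aiming to derive $h(G_1) = h(G_2)$. By the hypothesis $f \sqsupseteq g$, applied in its contrapositive form, the equality $f(G_1) = f(G_2)$ forces $g(G_1) = g(G_2)$. Next, by the hypothesis $g \sqsupseteq h$, again in contrapositive form, the equality $g(G_1) = g(G_2)$ forces $h(G_1) = h(G_2)$. Stringing these together gives
\begin{align}
f(G_1) = f(G_2) \implies g(G_1) = g(G_2) \implies h(G_1) = h(G_2),
\end{align}
and since $G_1, G_2$ were arbitrary this is precisely the contrapositive characterization of $f \sqsupseteq h$. (One could equally run the argument forward using the ``$\neq$'' form: from $h(G_1) \neq h(G_2)$ deduce $g(G_1) \neq g(G_2)$ via $g \sqsupseteq h$, then $f(G_1) \neq f(G_2)$ via $f \sqsupseteq g$; the two routes are logically identical.)

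I do not anticipate any genuine obstacle here, as the statement reduces to composition of implications once the correct (contrapositive) form of the definition is selected. The only point worth stating cleanly is that the quantifier over all pairs $G_1, G_2$ is shared across both hypotheses, so no uniformity issue arises in chaining them. This transitivity is exactly what later lets us build up counting power through a sequence of intermediate graph functions in the proof of Theorem~\ref{th1}, where each recursion step yields a function that refines the one before.
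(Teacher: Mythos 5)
Your argument is correct and is exactly the chaining of implications that the paper's one-line proof (``holds by definition'') leaves implicit; spelling out the contrapositive form and noting the shared quantifier over $G_1, G_2$ is a faithful elaboration of the same approach, not a different one.
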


\begin{proof}
The proposition holds by definition. 
\end{proof}

\begin{proposition} \label{prop_combine}
Consider graph functions $f,g:\G_n \to \R^d$ such that $ f \sqsupseteq g$. Then, there is a function $\xi: \R^d \to \R^d$ such that $\xi \circ f = g$.
\end{proposition}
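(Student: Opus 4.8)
The plan is to construct $\xi$ explicitly by using the graphs in $\G_n$ as representatives of the fibers of $f$. First I would define $\xi$ on the image $\mathrm{Im}(f) = \{f(G) : G \in \G_n\} \subseteq \R^d$: for each $y \in \mathrm{Im}(f)$, choose some $G_y \in \G_n$ with $f(G_y) = y$ and set $\xi(y) := g(G_y)$. On the complement $\R^d \setminus \mathrm{Im}(f)$ I would set $\xi$ to an arbitrary fixed value (say $0$), since those points are never attained by $f$ and hence play no role.

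The key step is to verify that $\xi$ is well-defined on $\mathrm{Im}(f)$, i.e., that $\xi(y)$ does not depend on the chosen representative $G_y$. This is precisely where the hypothesis $f \sqsupseteq g$ is used: if $G_1, G_2 \in \G_n$ both satisfy $f(G_1) = f(G_2) = y$, then the second (equivalent) formulation of $f \sqsupseteq g$, namely $f(G_1) = f(G_2) \implies g(G_1) = g(G_2)$, forces $g(G_1) = g(G_2)$, so any two admissible representatives yield the same value. Once well-definedness is established, the identity $\xi \circ f = g$ is immediate: for any $G \in \G_n$, the graph $G$ is itself a valid representative of the fiber over $f(G)$, whence $\xi(f(G)) = g(G)$; since this holds for every $G$, we obtain $\xi \circ f = g$.

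I do not expect a serious obstacle here, as this is the standard ``factorization through a coarser map'' lemma, and essentially all of its content is the well-definedness check, which is a direct restatement of the hypothesis. The only mild subtlety is that defining $\xi$ on all of $\mathrm{Im}(f)$ requires selecting one representative per fiber simultaneously; this is harmless since $\G_n$ is a set of (attributed) graphs on $n$ vertices, and one can sidestep any appeal to choice entirely by fixing a canonical enumeration of $\G_n$ and taking the least-indexed representative of each fiber.
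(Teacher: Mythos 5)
Your proposal is correct and follows essentially the same route as the paper's proof: the paper partitions $\G_n$ into the fibers of $f$, observes that this partition refines the one induced by $g$, and defines $\xi$ as the induced map on classes, which is exactly your representative-choosing construction with well-definedness supplied by the hypothesis $f \sqsupseteq g$. Your additional remarks (extending $\xi$ arbitrarily off $\mathrm{Im}(f)$ so that it is defined on all of $\R^d$, and avoiding choice via a canonical enumeration) tidy up details the paper leaves implicit but change nothing of substance.
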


\begin{proof}
Let $\G_n = \sqcup_{i \in \N} \calF_i$ be the partitioning induced by the equality relation with respect to the function $f$ on $\G_n$. Similarly define $\calG_i$, $i \in \N$ for $g$. Note that due to the definition, $\{ \calF_i : i \in \N \}$ is a refinement for $\{\calG_i:  i \in \N \}$. Define $\xi$ to be the unique mapping from $\{ \calF_i : i \in \N \}$ to $\{\calG_i:  i \in \N \}$ which respects the equality  relation. One can observe that such $\xi$ satisfies the requirement in the proposition. 
\end{proof}

\begin{definition}
An RNP-GNN is called maximally expressive, if and only if 
\begin{itemize}
\item all the aggregate functions are injective as  mappings from a multi-set on a countable ground set to their codomain. 
\item all the combine functions are injective mappings. 
\end{itemize}
\end{definition}

\begin{proposition} 
 Consider two RNP-GNNs $f,g$ with the same recursion parameters $\bold{r} = (r_1,r_2,\ldots,r_\tau)$ where $f$ is maximally expressive. Then, $f \sqsupseteq g$.
\end{proposition}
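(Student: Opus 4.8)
The plan is to induct on the recursion depth $\tau$, but to prove a statement slightly stronger than the proposition itself: rather than comparing only the final graph-level outputs, I would compare the intermediate node encodings the two networks produce before the final readout. Concretely, I would show by induction on $\tau$ that if $f$ is maximally expressive and $g$ is an arbitrary RNP-GNN with the same recursion parameters $\mathbf{r}=(r_1,\dots,r_\tau)$, then there is a single function $\xi$ (depending only on the learnable functions of $f$ and $g$, \emph{not} on the graph, the input features, or the node) such that the encoding $g$ assigns to any node equals $\xi$ applied to the encoding $f$ assigns to that same node, for every input graph and every choice of node features. This is the node-level analogue of Proposition~\ref{prop_combine}, and it captures the intuition that a maximally expressive RNP-GNN induces the finest node partition among all RNP-GNNs sharing its recursion parameters.

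For the base case $\tau=1$, both $f$ and $g$ apply their aggregate/combine step to the \emph{same} argument, namely $h^{\text{in}}_v$ together with the multiset $\lb h^{\text{in}}_{u,\text{aug}} : u \in G_v\rb$; this argument is identical for the two networks because neighborhood extraction and the marking augmentation are deterministic and do not involve the learnable functions. Since $f$'s aggregation is injective, $f$'s node encoding determines this argument, which in turn determines $g$'s node encoding, producing the desired $\xi$ (formally, the factoring map of Proposition~\ref{prop_combine} applied to the aggregation functions). For the inductive step $\tau>1$, each network forms the same screened neighborhood $G_v$ with the same augmented features, then recurses with parameters $(r_2,\dots,r_\tau)$. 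The recursive sub-call of $f$ is itself maximally expressive and the sub-call of $g$ is an arbitrary RNP-GNN with identical parameters, so the inductive hypothesis supplies a graph-independent map $\xi'$ with $\hat h^{g}_{v,u}=\xi'(\hat h^{f}_{v,u})$ for all $u\in G_v$. Applying $\xi'$ elementwise shows that $\lb \hat h^{g}_{u,v}\rb$ is a function of $\lb \hat h^{f}_{u,v}\rb$, and since $f$'s top-level aggregation is injective it recovers both $h^{\text{in}}_v$ and $\lb \hat h^{f}_{u,v}\rb$ from $h^{f}_v$; composing, $g$'s depth-$\tau$ encoding is a function of $f$'s, yielding $\xi$ at depth $\tau$.

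Finally, I would lift the node-level statement to the graph level. If $f(G_1)=f(G_2)$, then injectivity of $f$'s readout (which, like the other aggregations, is injective for a maximally expressive network) forces $\lb h^f_v(G_1)\rb=\lb h^f_v(G_2)\rb$; applying $\xi$ elementwise preserves this multiset equality, so $\lb h^g_v(G_1)\rb=\lb h^g_v(G_2)\rb$, whence $g(G_1)=g(G_2)$. This is exactly $f\sqsupseteq g$. The step I expect to require the most care is the uniformity of the factoring maps across the recursion: the inductive hypothesis must deliver a \emph{single} $\xi'$ valid for every subgraph $G_v$ arising as $v$ ranges over the vertices (and valid across the varying orders of these $G_v$), which is precisely why I phrase the induction through an explicit graph-independent $\xi$ rather than only through the relation $\sqsupseteq$. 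The secondary point to verify is that the inputs to the two networks genuinely agree at every level, i.e. that neighborhood selection and the marking augmentation are functions of the graph alone; this is what makes injectivity of $f$'s maps enough to factor $g$ through $f$.
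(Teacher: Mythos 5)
Your argument is correct, and it is substantially more detailed than what the paper provides: the paper's entire proof of this proposition is the single sentence ``The proposition holds by definition.'' So rather than matching or diverging from the paper's route, your proposal supplies the argument the paper leaves implicit, and the comparison is one of rigor rather than of strategy. The paper takes it as immediate that injective aggregate/combine functions induce the finest possible partition among RNP-GNNs with the same recursion parameters; you make this precise through a node-level strengthening --- a single, graph-independent factoring map $\xi$ with $h^g_v=\xi(h^f_v)$, proved by induction on $\tau$ --- and then lift it to the graph level via the injective readout. Your emphasis on uniformity is the genuinely valuable addition: since the recursive sub-calls operate on screened neighborhoods $G_v$ of varying orders, an induction phrased purely in terms of the relation $\sqsupseteq$ (which the paper defines on $\G_n$ for a \emph{fixed} $n$) would not directly apply, whereas your explicit $\xi'$ is well-defined across all sizes precisely because maximal expressiveness requires injectivity over a countable ground set, so the inverse of $f$'s aggregation exists on its image independently of the ambient graph. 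This is exactly the point that ``by definition'' glosses over, and the same uniformity issue silently reappears later in the paper --- e.g., in the induction step of Theorem~\ref{th1}, where a factoring map $\xi\circ\hat{f}=C(\cdot;H^*_u)$ obtained from Proposition~\ref{prop_combine} is applied simultaneously across all neighborhoods $G^*(\calN_{r_1}(v))$ --- so your node-level formulation is arguably the version of the proposition one would want on record.
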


\begin{proof}
The proposition holds by definition.
\end{proof}

\begin{proposition}\label{prop1}
Consider a sequence of graph functions $f,g_1,\ldots,g_k$. If $f \sqsupseteq g_{i}$ for all $i \in [k]$, then 
\begin{align}
f \sqsupseteq \sum_{i=1}^k c_i g_{i},
\end{align}
for any $c_i \in \R$, $i \in \N$.
\end{proposition}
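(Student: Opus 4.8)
The plan is to argue directly from the contrapositive formulation of $\sqsupseteq$ given in the definition, namely that $f \sqsupseteq g$ holds precisely when, for all $G_1, G_2 \in \G_n$, the implication $f(G_1) = f(G_2) \implies g(G_1) = g(G_2)$ is valid. Working with this ``equal values propagate'' form (rather than the ``unequal values propagate'' form) is convenient here because the conclusion we want is itself an equality statement about the linear combination.

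First I would fix an arbitrary pair $G_1, G_2 \in \G_n$ satisfying $f(G_1) = f(G_2)$, and aim to show that $\sum_{i=1}^k c_i g_i(G_1) = \sum_{i=1}^k c_i g_i(G_2)$. By hypothesis $f \sqsupseteq g_i$ for each $i \in [k]$, so the assumed equality $f(G_1) = f(G_2)$ forces $g_i(G_1) = g_i(G_2)$ simultaneously for every $i$. The remaining step is purely algebraic: multiply each of these $k$ vector equalities in $\R^d$ by the scalar $c_i$ and sum over $i \in [k]$, which yields $\sum_{i=1}^k c_i g_i(G_1) = \sum_{i=1}^k c_i g_i(G_2)$. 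Since $G_1, G_2$ were an arbitrary pair with matching $f$-values, this is exactly the statement $f \sqsupseteq \sum_{i=1}^k c_i g_i$.

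Alternatively, I could give a more structural derivation using Proposition~\ref{prop_combine}. Each relation $f \sqsupseteq g_i$ provides a map $\xi_i : \R^d \to \R^d$ with $\xi_i \circ f = g_i$, so the linear combination factors through $f$ as $\sum_{i=1}^k c_i g_i = \big(\sum_{i=1}^k c_i \xi_i\big) \circ f$; any function of $f$ is trivially dominated by $f$ in the $\sqsupseteq$ order, giving the claim. I would present the first, elementary argument as the main proof and perhaps note this factorization as a remark.

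I do not expect any genuine obstacle: the statement is an immediate consequence of the definition, and the only point requiring a word of care is that the sum $\sum_{i=1}^k c_i g_i$ is well-defined as a single function $\G_n \to \R^d$, which holds because all the $g_i$ share the codomain $\R^d$ and the $c_i$ are scalars acting on vectors there. Everything else is a one-line propagation of equalities followed by linearity.
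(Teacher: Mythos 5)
Your proof is correct and matches the paper's argument exactly: both fix $G_1, G_2$ with $f(G_1) = f(G_2)$, use $f \sqsupseteq g_i$ to propagate equality to each $g_i$, and conclude by linearity that the weighted sums agree. The only minor difference is that the paper additionally remarks the argument extends to countable sums when the summation is bounded, which you do not need for the finite statement.
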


\begin{proof}
Since $f \sqsupseteq g_{i}$, we have
\begin{align}
\forall G_1,G_2\in G_n : 
f(G_1) = f(G_2) \implies  g_i(G_1) = g_i(G_2), 
\end{align}
for all $i \in [k]$. This means that for any $G_1,G_2\in \G_n$ if $f(G_1) = f(G_2)$ then $ g_i(G_1) = g_i(G_2)$,  $i\in [k]$,  and consequently  $\sum_{i=1}^k c_i g_{i}(G_1) = \sum_{i=1}^k c_ig_{i} (G_2)$. Therefore, from the definition we conclude $f \sqsupseteq \sum_{i=1}^k c_ig_{i}$. Note that the same proof also holds in the case of countable summations as long as the summation is  bounded. 
\end{proof}

\begin{definition}
Let $H=(\calV_H,\calE_H,X^H)$ be a attributed connected simple graph with $k$ nodes. For any attributed graph $G =(\calV_G,\calE_G,X^G)\in \G_n$, the induced subgraph count function $C(G;H)$  is defined as 
\begin{align}
C(G;H):=\sum_{\calS \subseteq [n]} \ind\{G(S) \cong H\}.
\end{align}
Also, let $\bar{C}(G;H)$ denote the number of  non-induced subgraphs of $G$ which are isomorphic to $H$. It can be defined with the homomorphisms from $H$ to $G$. Formally, if $n>k$  define
\begin{align}
\bar{C}(G;H):=\sum_{\calS \subseteq [n] \atop |\calS| = k} \bar{C}(G(\calS);H). 
\end{align}
Otherwise, $n=k$, and we define
\begin{align}
\bar{C}(G;H):=\sum_{\tilde{H} \in \tilde{\calH}(H) } c_{\tilde{H},H} \times  \ind \{ G \cong \tilde{H}\},
\end{align}
where
\begin{align}
\tilde{\calH}(H):=\{ \tilde{H} \in \G_k : \tilde{H} \Supset H \big \},
\end{align}
is defined with respect to the graph isomorphism, and $c_{\tilde{H},H} \in \N$ denotes the number of subgraphs in $H$ identical to $\tilde{H}$. Note that $\tilde{\calH}(H)$ is a finite set and $\Supset$ denotes  being a (not necessarily induced) subgraph. 
\end{definition}

\begin{proposition}
Let $\calH$ be a family of graphs. If for any $H \in \calH$, there is an  RNP-GNN $f_H(.;\theta)$ with recursion parameters $(r_1,r_2,\ldots,r_\tau)$ such that $f_H \sqsupseteq C(G;H)$, then there exists an RNP-GNN $f(.;\theta)$ with recursion parameters $(r_1,r_2,\ldots,r_\tau)$ such that $f \sqsupseteq \sum_{H \in \calH}C(G;H)$.  
\end{proposition}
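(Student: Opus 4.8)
The plan is to construct a single ``universal'' network that dominates all of the $f_H$ simultaneously, and then invoke closure of $\sqsupseteq$ under (bounded) linear combinations. Concretely, I would take $f(\cdot;\theta)$ to be a \emph{maximally expressive} RNP-GNN with the same recursion parameters $(r_1,r_2,\ldots,r_\tau)$. Such a network exists: throughout the recursion the node attributes range over a countable domain, and over a countable ground set one can realize injective multi-set aggregation functions together with injective combine functions (for instance via the construction of \cite{xu2018powerful}), so every \textsc{Aggregate} and \textsc{Combine} appearing in the recursion can be taken injective.

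Next I would simply chain the earlier propositions. By the proposition asserting that a maximally expressive RNP-GNN dominates any RNP-GNN sharing its recursion parameters, we have $f \sqsupseteq f_H$ for every $H \in \calH$. Combining this with the hypothesis $f_H \sqsupseteq C(\cdot;H)$ and the transitivity of $\sqsupseteq$ gives $f \sqsupseteq C(\cdot;H)$ simultaneously for all $H \in \calH$. Finally, applying Proposition~\ref{prop1} with all coefficients set to $1$ yields
\begin{align}
f \sqsupseteq \sum_{H \in \calH} C(\cdot;H),
\end{align}
which is exactly the claim.

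The one point that deserves a brief justification is that Proposition~\ref{prop1} was stated for a finite sum (its proof also covers countable sums \emph{provided the summation is bounded}), whereas $\calH$ is described only as a ``family of graphs.'' This is harmless on $\G_n$: the function $C(\cdot;H)$ depends on $H$ only up to isomorphism and is identically zero whenever $|\calV_H| > n$, so only finitely many distinct count functions arise among the $H \in \calH$, and the aggregate sum is bounded. Hence Proposition~\ref{prop1} applies verbatim. I expect no genuine obstacle here: the whole argument is a short composition of already-established facts (existence of a maximally expressive network, domination under equal recursion parameters, transitivity, and additive closure of $\sqsupseteq$), with the only mild care being the realizability of the injective aggregations and the finiteness of the sum.
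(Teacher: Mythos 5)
Your proof is correct and follows essentially the same route as the paper's: take a maximally expressive RNP-GNN $f$ with the same recursion parameters, use domination plus transitivity to get $f \sqsupseteq C(\cdot;H)$ for each $H \in \calH$, and apply Proposition~\ref{prop1} to conclude. Your added remarks --- realizability of the injective aggregations and the observation that only finitely many distinct count functions arise on $\G_n$, so the sum is bounded --- are careful refinements the paper leaves implicit, but they do not change the argument.
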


\begin{proof}
Let $f(.;\theta)$ be a maximally expressive RNP-GNN. Note that by the definition $f \sqsupseteq f_H$ for any $H \in \calH$. Since $\sqsupseteq$ is transitive,   $f \sqsupseteq C(G;H)$ for all $H \in \calH$, and using Proposition \ref{prop1}, we conclude that $f \sqsupseteq \sum_{H \in \calH}C(G;H)$. 
\end{proof}

The following proposition shows that there is no difference between counting induced attributed graphs and counting induced unattributed graphs in RNP-GNNs. 

\begin{proposition}
Let $H_0$ be an unattributed connected graph. Assume that for any attributed graph $H$, which is constructed by adding  arbitrary attributes to $H_0$, there exists an  RNP-GNN $f_H(.;\theta_H)$ such that $f_H \sqsupseteq C(G;H)$, then for its unattributed counterpart $H_0$, there exists an RNP-GNN $f(.;\theta)$ with the same recursion parameters as $f_H(.;\theta_H)$ such that $f \sqsupseteq C(G;H_0)$.
\end{proposition}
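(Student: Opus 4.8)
The plan is to reduce counting the unattributed graph $H_0$ to a bounded sum of counts of its attributed refinements, and then invoke the $\sqsupseteq$ machinery already developed. The key structural observation is that, since the input graph $G$ is attributed, any induced subgraph $G(\calS)$ whose underlying (attribute-forgetting) structure is isomorphic to $H_0$ is, \emph{as an attributed graph}, isomorphic to exactly one attributed refinement of $H_0$ — namely the one recording the attributes $G$ assigns to $\calS$. Writing $\calA(H_0)$ for the set of isomorphism classes of attributed graphs whose underlying unattributed graph is $H_0$, this partition of the induced copies of $H_0$ according to their attributes yields the decomposition
\begin{align}
C(G;H_0) = \sum_{H \in \calA(H_0)} C(G;H),
\end{align}
which holds for every $G \in \G_n$.

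First I would argue that the $f_H$ may all be taken with a single common recursion sequence $(r_1,\ldots,r_\tau)$. This is because covering sequences depend only on shortest-path distances, and attaching attributes to $H_0$ changes neither its edge set nor its distances; hence every $H \in \calA(H_0)$ admits exactly the covering sequences of $H_0$, and the hypothesis supplies each $f_H$ with these shared recursion parameters. I then take $f$ to be a \emph{maximally expressive} RNP-GNN with recursion parameters $(r_1,\ldots,r_\tau)$. By the proposition comparing RNP-GNNs of equal recursion parameters, $f \sqsupseteq f_H$ for every $H \in \calA(H_0)$; combining this with the hypothesis $f_H \sqsupseteq C(\cdot;H)$ and the transitivity of $\sqsupseteq$ gives $f \sqsupseteq C(\cdot;H)$ for every $H \in \calA(H_0)$.

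The final step would apply Proposition \ref{prop1} to the displayed sum with all coefficients equal to $1$. The hard part is the one genuine subtlety: $\calA(H_0)$ may be countably infinite when the attribute domain $\calX$ is infinite, so a priori the sum is not finite. I would resolve this using the bounded-summation extension remarked at the end of Proposition \ref{prop1}: for any fixed $G \in \G_n$, only finitely many attributed patterns can appear among its at most $\binom{n}{k}$ induced subgraphs (where $k$ is the number of vertices of $H_0$), so $C(G;H) = 0$ for all but finitely many $H$ and $\sum_{H \in \calA(H_0)} C(G;H) \le \binom{n}{k}$ is uniformly bounded over $\G_n$. Thus Proposition \ref{prop1} applies and gives $f \sqsupseteq \sum_{H \in \calA(H_0)} C(\cdot;H) = C(\cdot;H_0)$, which is the claim. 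The identical argument, using the decomposition of $\bar{C}(G;H_0)$ into attributed non-induced counts, handles the non-induced case as well.
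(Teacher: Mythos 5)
Your proof is correct and takes essentially the same route as the paper's: decompose $C(G;H_0)$ as the countable, pointwise-finite sum of $C(G;H)$ over the attributed refinements $H$ of $H_0$, take $f$ to be a maximally expressive RNP-GNN with the common recursion parameters so that $f \sqsupseteq C(\cdot\,;H)$ for each $H$ by transitivity, and conclude via the bounded countable-summation extension of Proposition~\ref{prop1}. Your two added observations --- that attaching attributes leaves distances, hence covering sequences and admissible recursion parameters, unchanged, and that the sum is uniformly bounded by $\binom{n}{k}$ --- simply make explicit details the paper leaves implicit.
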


\begin{proof}
If there exists an RNP-GNN $f_H(.;\theta_H)$ such that $f_H \sqsupseteq C(G;H)$, then for a  maximally expressive  RNP-GNN  $f(.;\theta)$ with the same recursion parameters as $f_H$   we also  have $f \sqsupseteq C(G;H)$. Let $\calH$ be the set of all attributed graphs $H=(\calV,\calE,X) \in \G_k$ up to graph isomorphism, where $X \in \calX^{k}$ for a countable set $\calX$. Note that $\calH = \{H_1,H_2,\ldots\}$ is a  countable set. 
Now we write
\begin{align}
C(G;H_0)&=\sum_{\calS \subseteq [n] \atop |\calS| = k} \ind\{G(S) \cong H_0\}\\
& = \sum_{\calS \subseteq [n] \atop |\calS| = k} \sum_{i \in \N} \ind\{G(S) \cong H_i\}\\
& =  \sum_{i \in \N} \sum_{\calS \subseteq [n] \atop |\calS| = k} \ind\{G(S) \cong H_i\}\\
& =  \sum_{i \in \N} C(G;H_i).\\
\end{align}
Now using Proposition \ref{prop1} we conclude that $f \sqsupseteq C(G;H_0)$ since $C(G;H_0)$ is always finite.  
\end{proof}

\begin{definition}
Let   $H$ be a  (possibly attributed) simple connected graph. For any $\calS \subseteq \calV_H$ and $v \in \calV_H$,  define 
\begin{align}
\bar{d}_H(v;\calS) := \max_{u \in \calS} d(u,v).
\end{align}
\end{definition}

\begin{definition}
Let $H$ be a (possibly attributed) connected simple graph with $k=\tau+1$ vertices. A permutation of vertices, such as $(v_1,v_2,\ldots,v_{\tau+1})$,  is called a vertex covering  sequence, with respect to a  sequence $\mathbf{r}  = (r_1,r_2,\ldots,r_{\tau}) \in \N^{\tau}$, called a covering sequence,  if and only if 
\begin{align}
\bar{d}_{H'_i}(v_i;\calS_i) \le r_i,
\end{align}
for $i \in [\tau+1]$, where $H'_i = H(\calS_{i})$ and $\calS_i = \{v_{i},v_{i+1},\ldots,v_{\tau+1}\}$. Let  $\calC_H({\bf r})$  denote the set of all vertex covering  sequences with respect to the covering sequence $\mathbf{r}$ for $H$.
\end{definition}


\begin{proposition}\label{prop_suffic}
For any $G,H \in \G_k$, if $G \Supset H$ (non-induced subgraph),   then 
\begin{align}
\calC_{H}({\bf r}) \subseteq  \calC_{G}({\bf r}),
\end{align}
for any sequence $\bold{r}$. 
\end{proposition}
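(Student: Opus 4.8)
The plan is to show that the covering condition transfers from $H$ to any edge-supergraph $G$ by exploiting the monotonicity of shortest-path distances under edge addition. Since both $G$ and $H$ lie in $\G_k$ and $G \Supset H$, I would first identify the two graphs on a common vertex set $\calV$ with $\calE_H \subseteq \calE_G$; the attributes play no role here, because covering sequences are defined purely through the shortest-path distances $d(\cdot,\cdot)$ and not through node labels.

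Next I would fix an arbitrary vertex covering sequence $(v_1,\ldots,v_{\tau+1}) \in \calC_H(\mathbf{r})$ and aim to verify that the very same permutation satisfies the defining inequalities for $G$. For each $i \in [\tau+1]$, set $\calS_i = \{v_i,\ldots,v_{\tau+1}\}$ and compare the two induced subgraphs $H'_i = H(\calS_i)$ and $G'_i = G(\calS_i)$. Because $\calE_H \subseteq \calE_G$, restricting to $\calS_i$ preserves this inclusion, so $G'_i$ is obtained from $H'_i$ by (possibly) adding edges while keeping the same vertex set $\calS_i$.

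The key step is the elementary observation that adding edges never increases distances: every path in $H'_i$ is also a path in $G'_i$, hence $d_{G'_i}(u,v_i) \le d_{H'_i}(u,v_i)$ for every $u \in \calS_i$. I would also note that the hypothesis $(v_1,\ldots,v_{\tau+1}) \in \calC_H(\mathbf{r})$ already forces $\bar{d}_{H'_i}(v_i;\calS_i) \le r_i < \infty$, so every vertex of $\calS_i$ is reachable from $v_i$ inside $H'_i$; this guarantees all the distances involved are finite and the quantities well-defined, and a fortiori they stay finite in the larger graph $G'_i$. Taking maxima over $u \in \calS_i$ then yields
\[
\bar{d}_{G'_i}(v_i;\calS_i) = \max_{u \in \calS_i} d_{G'_i}(u,v_i) \le \max_{u \in \calS_i} d_{H'_i}(u,v_i) = \bar{d}_{H'_i}(v_i;\calS_i) \le r_i.
\]

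Since this inequality holds for every $i \in [\tau+1]$, the permutation $(v_1,\ldots,v_{\tau+1})$ is by definition a vertex covering sequence for $G$ with respect to $\mathbf{r}$, i.e., it belongs to $\calC_G(\mathbf{r})$; as the chosen sequence was arbitrary, this gives $\calC_H(\mathbf{r}) \subseteq \calC_G(\mathbf{r})$. I do not anticipate a genuine obstacle: the whole argument rests on the single monotonicity fact, and the only point requiring care is confirming that the covering condition on $H$ already guarantees reachability (finite distances) along the sequence, so that passing to the denser graph $G$ keeps every quantity finite and the inequalities meaningful.
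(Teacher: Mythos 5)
Your proof is correct and follows the same route as the paper: the paper's entire argument is the single observation that $\bar{d}$ is non-increasing under edge addition, which is precisely the monotonicity fact you verify (you merely spell out the restriction to each induced subgraph $\calS_i$ and the finiteness check in full detail).
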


\begin{proof}
The proposition follows from the fact that the function $\bar{d}$ is decreasing with introducing new edges. 
\end{proof}

\begin{proposition}\label{prop_induced}
Assume that Theorem \ref{th1} holds for induced-subgraph count functions. Then, it also holds for the non-induced subgraph count functions. 
\end{proposition}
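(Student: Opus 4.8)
The plan is to reduce the non-induced case to the assumed induced case through a finite linear-algebraic identity that writes every non-induced count as a combination of induced counts over supergraphs on the same vertex set. Fix $H \in \calH$ on $k = \tau+1$ vertices, and recall $\tilde{\calH}(H) = \{\tilde H \in \G_k : \tilde H \Supset H\}$, a finite set (there are only finitely many $k$-vertex graphs up to isomorphism), together with the multiplicities $c_{\tilde H, H} \in \N$. Using the definition of $\bar C$ and exchanging the two finite summations, I would record that for every $G \in \G_n$ (the case $n=k$ being the definition itself),
\begin{align}
\bar{C}(G;H) = \sum_{\calS \subseteq [n],\, |\calS| = k}\ \sum_{\tilde H \in \tilde{\calH}(H)} c_{\tilde H, H}\, \ind\{G(\calS) \cong \tilde H\} = \sum_{\tilde H \in \tilde{\calH}(H)} c_{\tilde H, H}\, C(G; \tilde H).
\end{align}
Thus $\bar C(\cdot; H)$ is a finite linear combination of the induced count functions $C(\cdot; \tilde H)$ with $\tilde H \in \tilde{\calH}(H)$.

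Next I would check that these supergraphs inherit the covering sequence. Since $\tilde H \Supset H$ for every $\tilde H \in \tilde{\calH}(H)$, Proposition \ref{prop_suffic} gives $\calC_H(\mathbf{r}) \subseteq \calC_{\tilde H}(\mathbf{r})$; because $H$ admits $\mathbf{r}=(r_1,\ldots,r_\tau)$, the set $\calC_H(\mathbf{r})$ is nonempty, hence so is each $\calC_{\tilde H}(\mathbf{r})$. Therefore the finite family $\tilde{\calH} := \bigcup_{H \in \calH} \tilde{\calH}(H)$ consists entirely of $k$-vertex graphs admitting the covering sequence $\mathbf{r}$.

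Finally I would assemble the network. By the assumed induced version of Theorem \ref{th1} applied to $\tilde{\calH}$, for each $\tilde H \in \tilde{\calH}$ there is an RNP-GNN with recursion parameters $\mathbf{r}$ that dominates $C(\cdot; \tilde H)$. Taking $f(\cdot;\theta)$ to be a maximally expressive RNP-GNN with recursion parameters $\mathbf{r}$, the fact that such an $f$ dominates any RNP-GNN with the same parameters, together with transitivity of $\sqsupseteq$, yields $f \sqsupseteq C(\cdot; \tilde H)$ for every $\tilde H \in \tilde{\calH}$. Proposition \ref{prop1} then lifts this to finite linear combinations, so for each $H \in \calH$,
\begin{align}
f \sqsupseteq \sum_{\tilde H \in \tilde{\calH}(H)} c_{\tilde H, H}\, C(\cdot; \tilde H) = \bar C(\cdot; H),
\end{align}
which is precisely Theorem \ref{th1} for the non-induced count function.

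The main obstacle is conceptual rather than computational: one must recognize that the correct reduction moves \emph{upward} to supergraphs on the same vertex set (not to subgraphs), and that this is exactly the direction along which admissibility of a covering sequence is preserved by Proposition \ref{prop_suffic}. Once this is in place, the summation-exchange identity and the passage to linear combinations via Proposition \ref{prop1} are routine, so no further difficulty is expected.
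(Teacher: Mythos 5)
Your proposal is correct and follows essentially the same route as the paper's proof: it expands $\bar{C}(\cdot;H)$ via the definition and a finite summation exchange into the combination $\sum_{\tilde H \in \tilde{\calH}(H)} c_{\tilde H,H}\, C(\cdot;\tilde H)$ of induced counts over supergraphs, invokes Proposition~\ref{prop_suffic} to show each $\tilde H \Supset H$ inherits the covering sequence $\mathbf{r}$, and concludes with a maximally expressive RNP-GNN together with Proposition~\ref{prop1}. The only cosmetic difference is that you spell out the nonemptiness of $\calC_{\tilde H}(\mathbf{r})$ and the $n=k$ base case explicitly, which the paper leaves implicit.
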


\begin{proof}
Assume that for a connected (attributed or unattributed) graph $H$, there exists an RNP-GNN with appropriate recursion parameters $f_H(.;\theta_H)$ such that $f_H \sqsupseteq C(G;H)$, then we prove there  exists an RNP-GNN $f(.;\theta)$ with the same recursion parameters as $f_H$ such that  $f \sqsupseteq \bar{C}(G;H)$. 

If there exists  an RNP-GNN $f_H(.;\theta_H)$ such that $f_H \sqsupseteq C(G;H)$, then for a  maximally expressive RNP-GNN  $f(.;\theta)$ with the same recursion parameters as $f_H$   we also have $f \sqsupseteq C(G;H)$. Note that 
\begin{align}
\bar{C}(G,H) &=\sum_{\calS \subseteq [n] \atop |\calS| = k} \bar{C}(G(\calS);H)\\
& =  \sum_{\calS \subseteq [n] \atop |\calS| = k}  \sum_{\tilde{H} \in \tilde{\calH}(H) } c_{\tilde{H},H} \times  \ind \{ G(S) \cong \tilde{H}\} \\
& =  \sum_{\tilde{H} \in \tilde{\calH}(H) } c_{\tilde{H},H}  \sum_{\calS \subseteq [n] \atop |\calS| = k}    \ind \{ G(S) \cong \tilde{H}\} \\
& =  \sum_{i \in \N} c_{H_i,H}  \times C(G,H_i),
\end{align}
where $ \tilde{\calH}(H)=\{ H_1,H_2,\ldots\}$. 
  \begin{claim} \label{claim1}
  $f \sqsupseteq C(G,H_i)$ for any $i$.
  \end{claim}
 Using Proposition \ref{prop1} and  Claim \ref{claim1} we conclude that $f \sqsupseteq \bar{C}(G;H)$ since $\bar{C}(G;H)$ is  finite and $f \sqsupseteq C(G,H_i)$ for any $i$, and the proof is  complete. 
  The missing part which we must show here is that  for any $H_i$ the sequence $(r_1,r_2,\ldots,r_t)$ which covers $H$ also covers $H_i$.  This   follows from Proposition \ref{prop_suffic}. We are done. 
\end{proof}

At the end of this part, let us introduce an important  notation. For any  attributed connected simple graph on $k$ vertices $G=(\calV,\calE,X)$, let $G^*_v$ be the resulting induced  graph obtained after removing $v\in \calV$ from $G$ with the new attributes defined as
  \begin{align}
 X^*_u := (X_u,\ind\{(u,v) \in \calE\}),
\end{align}
for each $u \in \calV\setminus \{v\}$. We may also use $X^{*v}_u$ for more clarification.

\subsection{Proof of Theorem \ref{th1}}\label{app:proof-counting}
We utilize an inductive proof on $\tau$,  which is the length of the covering sequence of $H$. Equivalently, due to the definition, $\tau=k-1$, where $k$ is the number of vertices in $H$.  First, we note that  due to Proposition \ref{prop_induced}, without loss of generality,  we can assume that $H$ is a simple connected attributed graph and the goal is to achieve the   induced-subgraph count function via an RNP-GNN with appropriate recursion parameters. We also consider only maximally expressive networks here to prove the desired result. 

\textbf{Induction base.}  For the induction base,   i.e.,  $\tau=1$,  $H$ is a two-node graph. This means that we only need to count the number of a specific (attributed) edge in the given graph $G$.  Note that in this case we apply an RNP-GNN with recursion parameter $r_1 \ge 1$. Denote the two attributes of the vertices in $H$  by $X^H_1,X^H_2 \in \calX$. The output of an RNP-GNN $f(.;\theta)$ is 
\begin{align}
f(G;\theta) = \phi (\lb    \psi(X^G_v, \varphi (\lb  X^{*v}_u : u \in \calN_{r_1}(v)\rb )) : v \in [n]\rb ), 
\end{align}
where  we assume that $f(.;\theta)$ is  maximally expressive. The goal is to show that $f \sqsupseteq C(G;H)$.  Using the transitivity of $\sqsupseteq$, we only need to choose appropriate $\phi,\psi,\varphi$ to achieve $\hat{f} = C(G;H)$ as the final representation. Let
\begin{align}
\phi( \lb  z_v  : v \in [n]\rb )& :=  \frac{1}{2+ 2 \times \ind\{X_1^H = X_2^H\}} \sum_{i=1}^n z_i\\
\psi(X,(z,z')) & :=z \times  \ind\{ X = X_1^H\} +z' \times  \ind\{ X = X_2^H\} \\
\varphi(\lb  z_u : u \in [n'] \rb ) &:=  \Big( \sum_{i=1}^{n'} \ind \{ z_u = (X^H_2,1)\},  \sum_{i=1}^{n'} \ind \{ z_u = (X^H_1,1)\Big). 
\end{align}
Then, a simple computation shows that  
\begin{align}
\hat{f}(G;\theta) & =  \phi (\lb    \psi(X^G_v, \varphi (\lb  X^{*v}_u : u \in \calN_{r_1}(v)\rb )) : v \in [n]\rb ), \\
&= C(G;H). 
\end{align}
Since $\hat{f}(.;\theta)$ is an RNP-GNN with recursion parameter $r_1$ and for any maximally expressive RNP-GNN $f(.;\theta)$ with the same recursion parameter as $\hat{f}$ we have $f \sqsupseteq \hat{f}$ and $\hat{f} \sqsupseteq C(G;H)$, we conclude that $f \sqsupseteq C(G;H)$ and this completes the proof.

\textbf{Induction step.} Assume that the desired result holds for $\tau-1$ ($\tau \ge2$). We show that it also holds for $\tau$. Let us first define 
\begin{align}
\calH^* &:= \{ H^*_{v_1} :  \exists v_2,\ldots,v_\tau \in [k] :(v_1,v_2,\ldots,v_\tau) \in \calC_{H}(\bold{r}) \} \\
c^* (H^0) &:= \ind\{ H^0 \in \calH^*\}  \times    \# \{ v \in [k] : H^*_v \cong H^0 \},
\end{align}
where $H^*_v$ means the induced subgraph after removing a node, with new attributes (see \ref{sec_pre}).
Note that $\calH^* \neq  \emptyset$ by the assumption.  Let 
\begin{align}
\|\calH^*\|:= \sum _{H^0 \in \calH^*}c^* (H^0) .
\end{align}

For all $H^0 \in \calH^*$, using the induction hypothesis,  there is a (universal) RNP-GNN $\hat{f}(.;\hat{\theta})$ with recursion parameters $(r_2,r_3,\ldots,r_\tau)$ such that $\hat{f} \sqsupseteq C(G;H^0)$. Using Proposition 
\ref{prop1} we conclude 
\begin{align}
\hat{f} \sqsupseteq \sum_{u \in [k]: H^*_u \in \calH^*}   C(G;H^*_u). 
\end{align}
 Define a maximally expressive RNP-GNN with the recursion parameters $(r_1,r_2,\ldots,r_\tau)$ as follows:
\begin{align}
f(G;\theta) = \phi (\lb    \psi(X^G_v, \hat{f}(G^*(\calN_{r_1}(v));\hat{\theta})) : v \in [n]\rb ). 
\end{align}
 Similar to the proof for $\tau=1$, here we only need to propose a (not necessarily maximally expressive) RNP-GNN which achieves the function $C(G;H)$.
 
 Let us define 
 \begin{align}
f_{H^*_u}(G;\theta) := \phi (\lb    \psi_{H^*_u}(X^G_v, \xi \circ \hat{f}(G^*(\calN_{r_1}(v));\hat{\theta})) : v \in [n]\rb ), 
\end{align}
where
 \begin{align}
\phi( \lb  z_v  : v \in [n]\rb )& :=  \frac{1}{\|\calH^*\|} \sum_{i=1}^n z_i\\
\psi_{H^*_u}(X,z) & := z \times \ind\{X = X^H_u\}, \\
\end{align}
 and $\xi \circ \hat{f}  = C(G;H^*_u)$. Note that the existence of such function $\xi$ is guaranteed due to Proposition \ref{prop_combine}. 
Now we write
\begin{align}
\|\calH^*\| \times C(G;H) &= \|\calH^*\| \sum_{\calS \subseteq [n]} \ind\{ G(S) \cong H\} \\
&=\sum_{\calS \subseteq [n]} \sum_{v \in \calS } \ind\{  \exists u \in [k] : (G(S\setminus \{v\}))^*_v  \cong H^*_u\in \calH^*  \land  X^G_v = X^H_u \} \\
&= \sum_{v \in [n] } \sum_{v \in \calS \subseteq [n]}  \ind\{  \exists u \in [k] : (G(S\setminus \{v\}))^*_v  \cong H^*_u\in \calH^*  \land  X^G_v = X^H_u \} \\
&= \sum_{v \in [n] } \sum_{v \in \calS \subseteq  \calN_{r_1}(v)}  \ind\{  \exists u \in [k] :  (G(S\setminus \{v\}))^*_v  \cong H^*_u\in \calH^*  \land  X^G_v = X^H_u \} \\
&= \sum_{v \in [n] } \sum_{v \in \calS \subseteq  \calN_{r_1}(v)} \sum_{u \in [k]: H^*_u \in \calH^*}  \ind\{ (G(S\setminus \{v\}))^*_v  \cong H^*_u\} \ind\{  X^G_v = X^H_u \} \\
& = \sum_{v \in [n]} \sum_{u \in [k]: H^*_u \in \calH^*} C(G^*(\calN_{r_1}(v));H^*_u) \times  \ind\{X^G_v = X^H_u\},
\end{align}
which means that 
\begin{align}
 \sum_{u \in [k]: H^*_u \in \calH^*} f_{H^*_u}(G;\theta) \sqsupseteq C(G;H). 
\end{align}
However, for a maximally expressive RNP-GNN $f(.;\theta)$ we know that $f \sqsupseteq f_{H^*_u}$ for all $H^*_u \in \mathcal{H}$ and this means that $f \sqsupseteq C(G;H)$. The proof is thus complete.

\section{Proof of Theorem \ref{th2}}\label{app:proof-universal}

For any attributed graph $H$ on $r$ nodes (not necessarily connected) we claim that RNP-GNNs can count them.

\begin{claim}
Let $f(.;\theta) :\G_n \to \R^d$ be a maximally expressive RNP-GNN with recursion parameters $(r-1,r-2,\ldots,1)$. Then, $f \sqsupseteq C(G;H)$.  
\end{claim}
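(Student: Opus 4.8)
The plan is to reduce the claim to the connected case, which is already covered by Theorem~\ref{th1}. First I would record that every connected graph on $k \le r$ vertices admits the covering sequence $(k-1,k-2,\ldots,1)$, and since increasing any coordinate of a covering sequence (and appending zeros) preserves validity, such a graph also admits $(r-1,r-2,\ldots,1)$. Thus for every connected (possibly attributed) graph $H'$ on at most $r$ vertices, Theorem~\ref{th1} yields an RNP-GNN with recursion parameters $(r-1,\ldots,1)$ that counts $H'$; since the maximally expressive $f$ with the same parameters dominates every such network, $f \sqsupseteq C(G;H')$. When $H$ is itself connected this is exactly the claim, so it remains to treat disconnected $H$.

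For disconnected $H$ covering sequences are unavailable, since the covering distance $\bar d_H$ is infinite between components; instead I would express $C(G;H)$ as a polynomial, with coefficients independent of $G$, in the connected counts $\{C(G;H') : H' \text{ connected},\ |\calV_{H'}| \le r\}$. Writing $H = H_1 \sqcup \cdots \sqcup H_c$ for its connected components, consider the product $\prod_{i=1}^c C(G;H_i)$, which counts tuples $(\calS_1,\ldots,\calS_c)$ of vertex sets with $G(\calS_i) \cong H_i$ and no constraint imposed between distinct $\calS_i$. Partitioning these tuples by the isomorphism type $H''$ of $G(\calS_1 \cup \cdots \cup \calS_c)$ together with its overlap/cross-edge pattern, the number of tuples realizing any fixed induced copy of a given type is a constant $a_{H''} \in \N$ depending only on $H_1,\ldots,H_c$ and $H''$, giving
\begin{align}
\prod_{i=1}^c C(G;H_i) = \sum_{H''} a_{H''}\, C(G;H''),
\end{align}
a sum over types $H''$ on at most $r$ vertices with $G$-independent coefficients.

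The term $H''=H$, the fully disjoint and cross-edge-free configuration, occurs with a nonzero coefficient $a_H$ (a product of factorials accounting for permutations of isomorphic components), so I can solve for $C(G;H)$. Every other $H''$ on the right either identifies vertices of distinct copies --- so it has strictly fewer than $r$ vertices --- or inserts an edge between distinct copies --- so it has strictly fewer connected components than $H$. A well-founded induction, on the number of vertices and then on the number of components with connected graphs as the base case, therefore writes each $C(G;H'')$, and hence $C(G;H)$, as a fixed-coefficient polynomial in connected subgraph counts on at most $r$ vertices. Since $\sqsupseteq$ is closed under finite sums (Proposition~\ref{prop1}) and under products (immediate from the definition: $f(G_1)=f(G_2)$ forces every factor, and hence the product, to agree), combining this with the connected case gives $f \sqsupseteq C(G;H)$. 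The main obstacle I anticipate is making the displayed identity precise --- verifying that the merge/cross-edge coefficients $a_{H''}$ are genuinely independent of $G$, that $a_H \neq 0$, and that the resulting induction is well-founded with connected graphs as its only base cases.
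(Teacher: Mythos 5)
Your proposal matches the paper's own proof essentially step for step: reduce to connected $H$ via Theorem~\ref{th1} (any connected graph on $k \le r$ vertices admits $(k-1,\ldots,1)$, hence $(r-1,\ldots,1)$), then expand $\prod_{i=1}^c C(G;H_i)$ as a $G$-independent integer combination of counts $C(G;H'')$ where every other $H''$ merges vertices or adds cross-edges between copies, solve for $C(G;H)$, and induct with Proposition~\ref{prop1}. The only differences are cosmetic refinements on your part --- you make the induction explicitly well-founded on (vertices, components) rather than components alone, spell out that $\sqsupseteq$ is closed under products, and replace the paper's symmetry constant $\alpha$ by the explicit factorial coefficient $a_H$ --- none of which changes the argument.
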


Now consider the function 
\begin{align}
\ell(G) = \phi (\lb  \psi(G(S)) : \calS \subseteq \calV, |\calS| \le r \rb  ).
\end{align}
We claim that $f \sqsupseteq \ell$ ($f$ is defined in the previous claim) and this completes the proof according to Proposition \ref{prop_combine}.

To prove the claim, assume that $f(G_1) = f(G_2)$. Then, we conclude that $C(G_1;H) = C(G_2;H)$ for any attributed $H$ (not necessarily connected) with $r$ vertices. Now, we have 
\begin{align}
\ell(G) &= \phi (\lb  \psi(G(S)) : \calS \subseteq \calV, |\calS| \le r \rb  ) \\
& = \phi (\lb  \psi(H) : H \in \G_r, ~\text{the multiplicity of  } ~H~ \text{is}~ C(G;H) \rb  ),
\end{align}
which shows that $\ell(G_1) = \ell(G_2)$.

\textit{Proof of Claim 2.} To prove the claim, we use an induction on the number of connected components  $c_H$ of graph  $H$. If $H$ is connected, i.e.,  $c_H=1$, then according to Theorem \ref{th1}, we know that $f \sqsupseteq C(G;H)$. 

Now assume that the claim holds for $c_H=c-1\ge 1$. We show that it also holds for $c_H=c$.
Let $H_1,H_2,\ldots,H_c$ denote the connected components of $H$. Also assume that $H_i \not \cong H_j$ for all $i\neq j$. We will  relax this assumption  later. 
Let us define
\begin{align}
\calA_G := \{ (\calS_1,\calS_2,\ldots,\calS_c) : \forall i \in [c]: \calS_i \subseteq [n]; G(\calS_i) \cong H_i \}.
\end{align}
Note that we can write
\begin{align}
|\calA_G|&= \prod_{i=1}^c C(G;H_i)\\
&= C(G;H) + \sum_{j=1}^\infty c'_j C(G;H'_j),
\end{align}
where $H'_1,H'_2,\ldots$ are all non-isomorphic graphs obtained by adding edges (at least one edge) between $c$ graphs $H_1,H_2,\ldots,H_c$, or contracting a number of vertices of them. The constants $c'_j$ are just used to remove the effect of multiple counting due to the symmetry. Now, since for any $H_i$, $H'_j$ the number of connected components is strictly less that $c$, using the induction,  we have  $f \sqsupseteq C(G;H_i)$ and  $f \sqsupseteq C(G;H'_j)$ for all $j$ and all $i \in [c]$. According to Proposition \ref{prop1}, we conclude that $f \sqsupseteq C(G;H)$ and this completes the proof. Also, if $H_i$, $i\in [c]$, are not pairwise non-isomorphic, then we can use $\alpha C(G;H)$ in above equation instead of $C(G;H)$, where $\alpha>0$ removes the effect of multiple counting by symmetry. The proof  is thus complete.

%
%

\section{Proof of Theorem \ref{th3}}

To prove Theorem \ref{th3}, we need to bound the number of node updates    required for an RNP-GNN with recursion parameters $(r_1,r_2,\ldots,r_t)$. First of all, we have  $n$   variables used for the final representations of vertices. For each vertex $v_1 \in \calV$, we explore the local neighborhood $\calN_{r_1}(v_1)$ and apply a new RNP-GNN network to that neighborhood. In other words, for the second step we need to update $|\calN_{r_1}(v_1)|$ nodes. Similarly, for the $i$th step of the  algorithm we  have  as most
\begin{align}
\lambda _i :=  \max_{v_1 \in [n]} \max_{  v_{j+1} \in \calN_{r_{j}}(v_{j})  \atop \forall j \in [i-1] } |\calN_{r_1}(v_1) \cap \calN_{r_2}(v_2) \cap \calN_{r_3}(v_3) \ldots \cap \calN_{r_i}(v_{i})|,
\end{align}
updates.  Therefore, we can bound the number of node updates as  
\begin{align}
n \times \prod_{i=1}^\tau \lambda_i.
\end{align}
Since $\lambda_i$ is decreasing in $i$, we simply conclude the desired result.

\section{Proof of Theorem \ref{theorem4}}
 
 Let $K_k$ denote the complete graph on $k$ vertices. 
 
 \begin{claim}\label{claim_countring} For any $k,n \in \N$, such that $n$ is sufficiently large, 
 \begin{align}
 \Big |\{C(G;K_k) : G \in \G_n \} \Big | \ge \frac{(cn/(k\log(n/k))-k)^k}{k!} = \tilde{\Omega}(n^k),
 \end{align}
 where $c$ is a constant which does not depend on $k,n$. 
 \end{claim}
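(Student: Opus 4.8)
The goal is to lower-bound the number of distinct values that the induced clique-count $C(\cdot;K_k)$ can take over $\G_n$, since (in the proof of Theorem~\ref{theorem4}) every distinct count forces a distinct output of any counting representation. The plan is therefore to \emph{exhibit a large family of $n$-vertex graphs whose $K_k$-counts are pairwise distinct}, and count it. The natural family is disjoint unions of cliques (padded with isolated vertices up to $n$ vertices): if $G$ is a disjoint union of complete graphs of sizes $s_1,\ldots,s_m$, then every copy of $K_k$ must lie inside a single component, so
\begin{equation}
C(G;K_k) = \sum_{i=1}^m \binom{s_i}{k}.
\end{equation}
Here induced and non-induced counts coincide because the components are complete. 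This reduces the problem to a purely arithmetic one: how many distinct values can $\sum_i \binom{s_i}{k}$ take subject to $\sum_i s_i \le n$?

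First I would fix a pool of $M$ admissible clique sizes $a_1 < \cdots < a_M$, all bounded by $n/k$ so that \emph{any} $k$ of them fit inside $n$ vertices, and consider the graphs $G_S$ formed by taking one clique of each size in a $k$-subset $S \subseteq [M]$. These have count $\sum_{i\in S}\binom{a_i}{k}$, so distinct $k$-subsets give distinct counts \emph{provided the values $\binom{a_i}{k}$ have all their $k$-wise subset sums distinct} (a Sidon-type, or $B_k$-set, condition). Under such a choice the number of distinct counts is at least the number of $k$-subsets, $\binom{M}{k} \ge \frac{(M-k)^k}{k!}$, which is exactly the stated form once $M$ is taken to be $\Theta\!\big(n/(k\log(n/k))\big)$ --- the largest pool the vertex budget $a_i \le n/k$ allows while preserving subset-sum distinctness. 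The final bound $\binom{M}{k}=\tilde{\Omega}(n^k)$ (for fixed $k$) then follows by a routine estimate.

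I expect the \textbf{main obstacle} to be guaranteeing the subset-sum distinctness while keeping the pool as large as $\Theta(n/(k\log(n/k)))$. The tension is that the binomial values are not free parameters --- each $\binom{a_i}{k}$ is pinned down by an integer size $a_i \le n/k$ --- and the crude separation that makes a ``largest term dominates'' argument go through (geometric growth of the $\binom{a_i}{k}$) forces the sizes to grow exponentially, capping $M$ at only $O(k\log(n/k))$, far too few. To reach a \emph{polynomially} large pool one must instead demand only that the $k$-wise sums be distinct (the weaker $B_k$ condition) and argue, e.g.\ by a greedy/counting selection, that $\Omega(n/(k\log(n/k)))$ sizes in $[1,n/k]$ can be chosen whose binomial coefficients avoid all $k$-wise sum collisions; the $\log(n/k)$ loss in $M$ is precisely the price of this selection. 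The remaining steps --- the identity $C=\sum_i\binom{s_i}{k}$, the vertex-budget bookkeeping, and the inequality $\binom{M}{k}\ge (M-k)^k/k!$ --- are straightforward.
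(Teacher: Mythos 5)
Your reduction to disjoint unions of cliques and the identity $C(G;K_k)=\sum_i \binom{s_i}{k}$ is fine, but the whole proof hinges on the selection lemma you defer to the end: that one can choose $M=\Omega\big(n/(k\log(n/k))\big)$ sizes $a_i\le n/k$ whose values $\binom{a_i}{k}$ form a $B_k$ set (all $k$-wise subset sums distinct). This is not a routine greedy/counting fact, and the argument you sketch does not deliver it. Run the greedy argument quantitatively: with $j$ values already chosen, a new value $x=\binom{a}{k}$ must avoid every relation $x=\sigma(T)-\sigma(S')$ with $T$ a $k$-subset and $S'$ a $(k-1)$-subset of the chosen values, i.e.\ roughly $j^{2k-1}$ forbidden values, each excluding at most one candidate size (as $a\mapsto\binom{a}{k}$ is injective). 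Since the pool has only $n/k$ candidates, greedy stalls at $M=\Theta\big((n/k)^{1/(2k-1)}\big)$, giving only $\binom{M}{k}=\tilde\Theta\big(n^{k/(2k-1)}\big)$ distinct counts --- exponent at most $2/3$ for $k\ge2$, polynomially far from the claimed $\tilde\Omega(n^k)$. Nor can a counting/probabilistic selection be waved in: your target $M$ is within a $\Theta(\log n)$ factor of the maximum possible size of \emph{any} $B_k$ set in the relevant range $[1,\binom{n/k}{k}]$, and the candidate pool $\{\binom{a}{k}: a\le n/k\}$ has only about that many elements, so you are asserting that a $1/\log$-fraction of a specific, highly structured set of integers is $B_k$. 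Already for $k=2$ this amounts to exhibiting near-maximal Sidon subsets of the triangular numbers --- an open-problem-level statement, not a ``price of selection'' that follows from your sketch.

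The paper's proof avoids additivity altogether by making the count \emph{multiplicative}. For each $k$-subset $\calB=\{b_1,\ldots,b_k\}$ of indices of distinct primes $p_1,\ldots,p_m<n/k$ (with $m\sim n/(k\log(n/k))$ by the prime number theorem), it takes the complete $k$-partite graph with parts of sizes $p_{b_1},\ldots,p_{b_k}$, padded with isolated vertices. Every $K_k$ must use exactly one vertex from each part, so $C(G_\calB;K_k)=\prod_{i=1}^k p_{b_i}$, and unique factorization makes these $\binom{m}{k}$ products pairwise distinct with no selection lemma at all; the logarithmic loss comes from the density of primes, not from a Sidon-type sacrifice. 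So while your combinatorial bookkeeping ($\binom{M}{k}\ge(M-k)^k/k!$ and the vertex budget) matches the paper's, the distinctness mechanism is the genuine gap: to salvage the disjoint-clique family you would need to prove the $B_k$ selection lemma, whereas switching sums to products via the multipartite/prime construction yields the claim directly.
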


In particular, we claim that the number of different values that $C(G;K_k)$ can take is  $n^k$, up to poly-logarithmic factors.

To prove the theorem, we use the above claim. Consider a class of $(s,t)-$good graph representations $f(.;\theta)$ which can count any substructure on $k$ vertices. As a result, $f \sqsupseteq C(G;K_k)$ for an appropriate parametrization $\theta$.
By the definition, $f(.)$ must take at least $\Big |\{C(G;K_k) : G \in \G_n \} \Big |$ different values, i.e., 
\begin{align}
\Big | \{ f(G;\theta) : G \in \G_n \} \Big | \ge \Big | \{C(G;K_k) : G \in \G_n \} \Big |.
\end{align}
Also,
\begin{align}
\Big | \{ f(G;\theta) : G \in \G_n \} \Big |   \le \Big |    \big \{  \lb  \psi(G_i): i \in [t] \rb : G \in \G_n\big \} \Big |,
\end{align}
where  $(G_1,G_2,\ldots,G_t) = \Xi (G)$. But, $\psi$ can take only $s$ values. Therefore, we have 
\begin{align}
 \Big | \{C(G;K_k) : G \in \G_n \} \Big | &\le
\Big | \{ f(G;\theta) : G \in \G_n \} \Big | \\ & \le \Big |    \big \{  \lb  \psi(G_i): i \in [t] \rb : G \in  \G_n\big \} \Big | \\
&\le  \Big |    \big \{  \lb  \alpha_i: i \in [t] \rb : \forall i \in [t]: \alpha_i \in [s] \} \Big |\\
& \le (t+1)^{s-1}.
\end{align}
As a result, $(t+1)^{s-1} = \tilde{\Omega}(n^k)$ or   $t = \tilde{\Omega}(n^{\frac{k}{s-1}})$. To complete the proof, we only need to prove the claim.

\textit{Proof of Claim \ref{claim_countring}.} 
Let $p_1,p_2,\ldots,p_{m}$ be    distinct  prime numbers less than $n/k$. Using the prime number theorem, we know that $\lim_{n \to \infty} \frac{m}{n/(k\log(n/k))} = 1$. In particular, we can choose $n$ large enough to ensure $cn/(k \log(n/k))<m$ for any constant $c<1$.

For any  ${\calB}=  \{ b_1, b_2,\ldots, b_k\} \subseteq [m] $, define $G_{\calB}$ as a graph on $n$ vertices such that $\calV_{G_{\calB}} = V_0 \sqcup (\sqcup_{i \in [k]} \calV_i)$, and $|\calV_i| = p_{b_i}$. Also,
\begin{align}
e=(u,v) \in G_{\calB} \iff \exists ~{i,j \in [m], i\neq j} : u \in \calV_i ~ \& ~v \in \calV_j.
\end{align}
 The graph $G_{\calB}$ is well-defined since $\sum_{i=1}^k p_{b_i} \le k\times n/k = n$. 
Note that $C(G_{\calB};K_k) = \prod_{i=1}^k p_{b_i}$.
Also, since $p_i$, $i \in [m]$, are prime numbers, there is a unique bijection
\begin{align}
\calB \overset{\varphi}{\longleftrightarrow}C(G_{\calB};K_k).
\end{align}
Therefore, 
\begin{align}
  \Big |\{C(G;K_k) : G \in \G_n \} \Big |& \ge  \Big |\{C(G_{\calB};K_k) : \calB \subseteq [m], |\calB| = k\}  \Big |\\
  & = {m \choose k}  \\&\ge
  \frac{(m-k)^k}{k!}  \\&\ge
 \frac{(cn/(k\log(n/k))-k)^k}{k!}.
 \end{align}

\section{Relationship to the Reconstruction Conjecture}\label{sec:reconstruction}

Theorem \ref{th2} provides a universality  result for RNP-GNNs. Here, we note that the proposed method is closely  related to the reconstruction conjecture, an old open problem in graph theory. This motivates us to explain their relationship/differences. First, we need a  definition for unattributed graphs. 

\begin{definition}
Let $\calF_n \subseteq \G_n$ be a set of graphs and let  $G_v = G(\calV \setminus \{v\})$ for any  finite simple graph $G=(\calV,\calE)$, and  any $v \in \calV$. Then, we say the set $\calF$ is reconstructible if and only if there is a bijection 
\begin{align}
\lb G_v: v \in \calV \rb  \overset{\Phi}{\longleftrightarrow} G,
\end{align}
for any $G \in \calF_n$.  In other words, $\calF_n$ is reconstructible, if and only if  the multi-set $\lb  G_v: v \in \calV\rb $ fully identifies  $G$ for any $G \in \calF_n$. 
\end{definition} 

It is known that the class of disconnected graphs, trees, regular graphs, are reconstructible \citep{kelly1957congruence, mckay1997small}.  The general case is still open; however it is  widely believed that it is true. 

\begin{conjecture}[\cite{kelly1957congruence}] $\G_n$ is reconstructible. 
\end{conjecture}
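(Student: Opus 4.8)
The plan is to attack the conjecture through the subgraph-counting information that the deck already determines, and then to bootstrap from these reconstructible invariants up to the full adjacency structure. The first and entirely rigorous step is Kelly's counting lemma: for every graph $F$ on $k<n$ vertices, the induced count $C(G;F)$ is a function of the deck $\lb G_v : v \in \calV \rb$ alone. Indeed, each induced copy of $F$ survives in exactly the $n-k$ cards that delete a vertex outside the copy, so double counting gives
\begin{align}
C(G;F) = \frac{1}{\,n-k\,}\sum_{v \in \calV} C(G_v;F),
\end{align}
and every term on the right is read from a single card. This immediately reconstructs the number of edges, the degree sequence, the number of components, and indeed the induced count of every graph on fewer than $n$ vertices.

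Second, I would use these counts to dispose of the classical ``easy'' families uniformly rather than case by case, so as to isolate the genuine difficulty. Reconstructibility of component multiplicities forces disconnected graphs (and, dually, graphs with disconnected complement); reconstructibility of the degree sequence handles regular graphs once one also counts paths and cycles of bounded length; reconstructibility of leaf and branch counts handles trees. The point of this step is not the individual families but the reduction: after it, the only graphs left to consider are pairs $G,G'$ that are \emph{deck-equivalent}, meaning $C(G;F)=C(G';F)$ for every proper $F$, yet a priori non-isomorphic.

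Third, and this is where the real work sits, I would try to rule out such a pair by a counting-to-structure argument. Writing adjacency as a formal combination of subgraph indicators, one would like a M\"obius/inclusion--exclusion inversion over the subgraph lattice that expresses the top count $C(G;G)$ as a determined function of the proper counts $\{C(G;F): |\calV_F|<n\}$, leaving no freedom beyond an overall isomorphism. If the resulting relations forced $C(G;G)=C(G';G')$, then deck-equivalence would collapse to $G\cong G'$ and the conjecture would follow.

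The hard part---and precisely why the statement appears as a conjecture rather than a theorem---is that this final inversion is, as far as is known, \emph{under}determined for exactly the sparsest and most symmetric graphs, where the proper-subgraph counts fail to separate the residual adjacency degrees of freedom. Closing that gap, rather than establishing Kelly's lemma or handling the structured families, is the true obstacle, and I would expect any successful proof to require a new reconstructible invariant that pins down the missing adjacency information in this residual class.
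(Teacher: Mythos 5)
You were asked about the Graph Reconstruction Conjecture, which the paper does not prove: it is stated (with citation to Kelly) explicitly as a conjecture, and Appendix~\ref{sec:reconstruction} notes that ``the general case is still open.'' Indeed, the paper's whole design point is to \emph{avoid} needing it---RNP-GNNs mark the deleted vertex's neighbors before recursing, so the structural information whose recoverability the conjecture asserts is preserved by construction rather than reconstructed. So there is no paper proof to compare against, and any complete argument here would be the resolution of a fifty-plus-year-old open problem.

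Your proposal, to its credit, is honest about this, but that honesty does not make it a proof: step three is a genuine gap, and it is exactly where the open problem lives. Step one (Kelly's lemma, $C(G;F)=\frac{1}{n-k}\sum_{v\in\calV}C(G_v;F)$ for $|\calV_F|=k<n$) is correct and standard. But the hoped-for inversion in step three does not exist: the M\"obius-type relations on the subgraph lattice interrelate counts of graphs on at most $n-1$ vertices and leave the isomorphism type of $G$ itself as a free parameter---the proper counts simply do not determine it by any known identity, and counting-based techniques (Lov\'asz--M\"uller style arguments for edge reconstruction) succeed only in the dense regime, failing precisely on the sparse, symmetric residual class you identify. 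Worse, your reduction quietly replaces the hypothesis: ``deck-equivalence'' as you define it (equal counts $C(G;F)=C(G';F)$ for all proper $F$) is implied by, but not known to be equivalent to, having the same deck, so ruling out count-equivalent pairs is a statement \emph{at least as strong} as the conjecture---you have made the target harder, not easier. A minor further slip: regular graphs are reconstructible by a direct argument (in a $d$-regular graph each card $G_v$ exposes the deleted vertex's neighbors as exactly the vertices of degree $d-1$), with no need for path or cycle counts. In sum, what you have written is a reasonable research program whose first step is a known lemma and whose last step is the unsolved problem itself; flagging the obstacle does not close it.
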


For RNP-GNNs, the reconstruction from the subgraphs $G^*_v$, $v \in [n]$ is possible, since we relabel any subgraph (in the definition of  $X^*$)
 and this preserves the critical information for the recursion to the original graph. In the reconstruction conjecture, this part of information is missing, and this makes the problem difficult. Nonetheless, since in RNP-GNNs we preserve the original node's information in the subgraphs with relabeling,  the reconstruction conjecture is not required to hold  to show the   universality results for RNP-GNNs, although that conjecture is a  motivation for this paper. Moreover, if it can be shown that the reconstruction conjecture it true, it may be also possible to find a simple encoding of subgraphs to an original graph and this may lead to more powerful but less complex new GNNs.

\section{The RNP-GNN Algorithm }\label{psc}
In this section, we provide pseudocode for RNP-GNNs. The algorithm below computes node representations. For a graph representation, we can aggregate them with a common readout, e.g., $h_G \gets \text{MLP}\Big( \sum_{v \in \calV} h_v^{(k)}  \Big)$.
Following \citep{xu2018powerful}, we use sum pooling here, to ensure that we can represent injective aggregation functions.


\begin{algorithm} [H]
\caption{ Recursive Neighborhood Pooling-GNN (RNP-GNN)} 
\begin{algorithmic}
\REQUIRE $G= (\calV,\calE, \{x_v\}_{v \in \calV})$ where $\calV=[n] $, recursion parameters $r_1,r_2,\ldots,r_t\in \N$, $\epsilon^{(i)} \in \R$, $i \in[\tau]$, node features $\{x_v\}_{v \in \calV} $.
\ENSURE $h_v$ for all $v \in \calV$
\STATE $h_v^{\text{in}} \gets x_v$ for all $v \in \calV$
\IF{$\tau=1$}
\STATE $$h_v \gets \text{MLP}^{(\tau,1)} \Big(  (1+ \epsilon^{(1)})h^{\text{in}}_v +   \sum_{u \in \calN_{r_1}(v)\setminus \{v\}} \text{MLP}^{(\tau,2)}(h^{\text{in}}_u, \ind{ (u,v) \in \calE})  \Big),$$ for all $v \in \calV$.
\ELSE
\FOR{ all $v \in V$}
\STATE $G_v' \gets G(\calN_{r_1}(v)\setminus \{v\})$, which has node attributes  $\{(h_u^{\text{in}}, \ind{(u,v) \in \calE})'\}_{u \in \mathcal N_{r_1}(v) \setminus \{v\}}$
\STATE $\{\hat{h}_{v,u}\}_{u \in G_v'\setminus \{v\}} \leftarrow \text{RNP-GNN}(G_v', (r_2,r_3,\ldots,r_\tau), (\epsilon^{(2)}, \ldots, \epsilon^{(\tau)}))$
\STATE $h_v \gets \text{MLP}^{(\tau)} \Big ( (1+\epsilon^{(\tau)})h_v^{\text{in}}+\sum_{ u \in  \calN_{r_1}(v)\setminus \{v\}  } \hat{h}_{u,v}\Big).$
\ENDFOR
\ENDIF
\RETURN $\{h_v\}_{v \in \calV}$
\end{algorithmic}
\end{algorithm}

With this algorithm, one can achieve the expressive power of RNP-GNNs if high dimensional MLPs  are allowed
\citep{xu2018powerful, hornik1989multilayer,hornik1991approximation}. That said, in practice, smaller MLPs may be acceptable \citep{xu2018powerful}.

\section{Computing a Covering Sequence} 

As we explained in the context of Theorem \ref{th1}, we need a covering sequence (or an upper bound to that) to design an RNP-GNN network that can count a given substructure. A covering sequence can be constructed from a spanning tree of the graph.

For reducing complexity, it is desirable to have a covering sequence with minimum $r_1$ (Theorem \ref{th3}). Here, we suggest an algorithm for obtaining such a covering sequence, shown in Algorithm~\ref{alg:cov}. For obtaining merely an aribtrary covering sequence, one can compute any minimum spanning tree (MST), and then proceed as with the MST in Algorithm~\ref{alg:cov}.

Given an MST, we build a vertex covering sequence by iteratively removing a leaf $v_i$ from the tree and adding the respective node $v_i$ to the sequence. This ensures that, at any point, the remaining graph is connected. At position $i$ corresponding to $v_i$, the covering sequence contains the maximum distance $r_i$ of $v_i$ to any node in the remaining graph, or an upper bound on that. For efficiency, an upper bound on the distance can be computed in the tree.

To minimize $r_1 = \max_{u \in \calV} d(u,v_1)$, we need to ensure that a node in $\arg\min_{v \in \calV}\max_{u \in \calV} d(u,v)$ is a leaf in the spanning tree. Hence, we first compute $\max_{u \in \calV} d(u,v)$ for all nodes $v$, e.g., by running All-Pairs-Shortest-Paths (APSP) \citep{kleinberg2006algorithm}, and sort them in increasing order by this distance. Going down this list, we try whether it is possible to use the respective node as $v_1$, and stop when we find one.

Say $v^*$ is the current node in the list. To compute a spanning tree where $v^*$ is a leaf, we assign a large weight to all the edges adjacent to $v^*$, and a very low weight to all other edges. If there exists such a tree, running an MST with the assigned weights will find one. Then, we use $v^*$ as $v_1$ in the vertex covering sequence. This algorithm runs in polynomial time.

\begin{algorithm} [H]
\caption{ Computing a covering sequence with minimum $r_1$ \label{alg:cov}} 
\begin{algorithmic}
\REQUIRE $H= (\calV,\calE,X)$ where $\calV=[\tau+1] $
\ENSURE A minimal covering sequence $(r_1,r_2\ldots,r_\tau)$, and its corresponding vertex covering sequence  $(v_1,v_2,\ldots,v_{\tau+1})$  
\STATE For any $u,v \in \calV$, compute $d(u,v)$ using APSP
\STATE $(u_1,u_2,\ldots,u_{\tau+1}) \gets$  all the vertices sorted increasingly in   $s(v):=\max_{u \in \calV} d(u,v)$ 
\FOR {$i = 1$ to $\tau+1$}
\STATE Set edge weights $w({u,v}) = 1 +  \tau \times \ind \{ u = u_i \lor v = u_i\}$ for all $(u,v) \in \calE$ 
\STATE ${H_T \gets}$ the MST of $H$ with weights $w$
\IF{$u_i$ is a leaf in $H_T$} 
\STATE ${v_1 \gets u_i}$
\STATE $r_1 \gets s(u_i)$
\STATE \bf{break}
\ENDIF 
\ENDFOR
\FOR{$i=2$ to $t+1$}
\STATE $v_i \gets$ one of the leaves of $H_T$
\STATE $r_i \gets \max_{u \in \calV_{H_T}} d(u,v_i)$
\STATE $H_T \gets H_T$ after removing $v_i$
\ENDFOR 
\RETURN $(r_1,r_2,\ldots,r_\tau)$ and $(v_1,v_2,\ldots,v_{t+1})$ 
\end{algorithmic}
\end{algorithm}

\section{Experimental Details}\label{appendix:experiments}

\subsection{Dataset and Task Details}

For the counting experiments, we follow the setup of \cite{chen2020can}. There are two datasets: one consisting of 5000 Erd\H{o}s-Renyi graphs~\citep{erdos1960evolution} and one consisting of 5000 noisy random regular graphs~\citep{steger1999generating}. Each Erd\H{o}s-Renyi graph has 10 nodes, and each random regular graph has either 10, 15, 20, or 30 nodes. Also, $n$ random edges are deleted from each random regular graph, where $n$ is the number of nodes.

For the experiments on distinguishing non-isomorphic graphs, we use the EXP dataset \citep{abboud2021surprising}. This dataset consists of 600 pairs of graphs (so 1200 graphs in total), where each pair is 1-WL equivalent but distinguishable by 3-WL, and each pair contains one graph that represents a satisfiable formula and one graph that represents an unsatisfiable formula. We report the mean and standard deviation across 10 cross-validation folds.

\subsection{RNP-GNN Implementation Details}

Here, we detail some specific design choices we make in implementing our RNP-GNN model. Most embeddings are computed in $\mathbb{R}^d$ for some fixed hidden dimension $d$. The input node features are first embedded in $\mathbb{R}^d$ by an initial linear layer. Then RNP layers are applied to compute node representations. Finally, a sum pooling across nodes followed by a final MLP is used to compute a graph-level output.

An RNP layer for $r=(r_1, \ldots, r_t)$ is implemented as follows. Note that the input node features to this layer are in $\mathbb{R}^d$ due to our initial linear layer. Also, note that we concatenate an extra feature dimension due to the augmented indicator feature at each recursion step. To align these feature dimensions, for $l \in [t]$, we parameterize the $l$-th GIN~\citep{xu2018powerful} by a feedforward neural network $\mathrm{MLP}^{(l)}: \mathbb{R}^{d+l} \to \mathbb{R}^{d+l-1}$. For instance, the last GIN has a feedforward network $\mathrm{MLP}^{(t)}: \mathbb{R}^{d+t} \to \mathbb{R}^{d+t-1}$, because after $t$ levels of recursion we have augmented $t$ features. Dropout and nonlinear activation functions are only applied in the MLPs.

\subsection{Hyperparameters}

For all baseline models, we take the results from other papers. Thus, for the counting experiments the configurations for the baseline models are from \cite{chen2020can}, while for the EXP experiments the configurations for the baseline models are from \cite{abboud2021surprising}.

\textbf{RNP-GNN hyperparameters.} For all experiments we ran random search over hyperparameters. In all cases we used the Adam optimizer with initial learning rate in $\{.01, .001, .0001, .0005\}$.  We train for 100 epochs with a batch size in $\{16, 32, 128\}$. The number of stacked RNP-GNNs for computing node representations is in $\{1, 2\}$. We use a dropout ratio in $\{0, .1, .5\}$. The recursion parameters used varies for each task. We used two layers for each MLP used in the aggregation function. Also, the graph-level output obtained after sum-pooling across nodes is computed by a two layer MLP.

Specifically for the counting experiments, 
 the number of hidden dimensions is searched in $\{16, 32, 64\}$.  For all tasks we used $r_1 = 1$. We use ReLU activations in the MLPs. We either decay the learning rate by half every $25, 50,$ or $\infty$ epochs (where $\infty$ means never decaying).
 
 For the EXP experiments, the number of hidden dimensions is searched in $\{8, 16, 32, 64\}$. We use either ELU or ReLU activations in the MLPs. We decay the learning rate by half at the 50th epoch. The recursion parameters are $r=(2, 1)$.

\end{document}